\documentclass[wcp,twoside,11pt]{jmlrclean}

\jmlrvolume{Work in progress by Shie Mannor, Vianney Perchet and Gilles Stoltz}
\jmlryear{(2016)}
\title[Approachability in Unknown Games]{Approachability in Unknown Games: \\ Online Learning Meets Multi-Objective Optimization}


\renewcommand{\leq}{\leqslant}
\renewcommand{\geq}{\geqslant}

\newcommand{\R}{\mathbb{R}}
\newcommand{\cA}{\mathcal{A}}
\newcommand{\cB}{\mathcal{B}}
\newcommand{\cC}{\mathcal{C}}
\newcommand{\cP}{\mathcal{P}}
\newcommand{\cT}{\mathcal{T}}
\newcommand{\cG}{\mathcal{G}}
\newcommand{\cH}{\mathcal{H}}
\newcommand{\cK}{\mathcal{K}}
\newcommand{\cKp}{\mathcal{K}_{\mbox{\rm \tiny proj}}}
\newcommand{\cR}{\mathcal{R}}
\newcommand{\ol}{\overline}
\newcommand{\unif}{\mbox{\rm \tiny unif}}
\newcommand{\parti}{\mbox{\rm \tiny part.}}
\newcommand{\dist}{\mathrm{d}}
\newcommand{\cav}{\mathrm{cav}}
\newcommand{\conv}{\mathrm{co}}
\newcommand{\vex}{\mathrm{vex}}
\newcommand{\veccol}[2]{\left[\begin{array}{c} #1 \\ #2 \end{array}\right]}
\newcommand{\ind}{\mathbb{I}}
\newcommand{\tm}{\widetilde{m}}
\newcommand{\ty}{\widetilde{y}}
\newcommand{\tx}{\widetilde{x}}
\renewcommand{\epsilon}{\varepsilon}
\newcommand{\argmin}{\mathop{\mathrm{arg\,min}}}
\newcommand{\argmax}{\mathop{\mathrm{arg\,max}}}

\newtheorem{assumption}{Assumption}
\newcommand{\proofof}[1]{\begin{proof}{\textbf{#1}}}

\begin{document}

\author[Mannor, Perchet, Stoltz]{Shie Mannor \Email{shie@ee.technion.ac.il} \\
       \addr Israel Institute of Technology (Technion), Faculty of Electrical Engineering, 32000 Haifa, Israel
       \AND
       Vianney Perchet \Email{vianney.perchet@normalesup.org} \\
       \addr ENSAE ParisTech, 3 avenue Pierre Larousse, 92240 Malakoff, France
       \AND
       Gilles Stoltz \Email{stoltz@hec.fr} \\
       \addr GREGHEC: HEC Paris --- CNRS, 1 rue de la Lib\'eration, 78351 Jouy-en-Josas, France}

\maketitle

\begin{abstract}
In the standard setting of approachability there are two players and a target set. The players play repeatedly
a known vector-valued game where the first player wants to have the average vector-valued payoff converge to
the target set which the other player tries to exclude it from this set.
We revisit this setting in the spirit of online learning and do not assume that the first player knows the game
structure: she receives an arbitrary vector-valued reward vector at every round.
She wishes to approach the smallest (``best'') possible set given the observed average payoffs in hindsight.
This extension of the standard setting has implications even when the original target set is not
approachable and when it is not obvious which expansion of it should be approached instead.
We show that it is impossible, in general, to approach the best target set in hindsight and propose
achievable though ambitious alternative goals.
We further propose a concrete strategy to approach these goals. Our method does not
require projection onto a target set and amounts to switching between scalar regret minimization
algorithms that are performed in episodes. Applications to global cost minimization
and to approachability under sample path constraints are considered.
\end{abstract}

\begin{keywords}
Approachability, online learning, multi-objective optimization
\end{keywords}

\section{Introduction}

The approachability theory of~\citet{Bla56} is arguably the most general approach available so far
for online multi-objective optimization and it has received significant attention recently in the learning
community (see, e.g., \citealp{BNR}, and the references therein).
In the standard setting of approachability there are two players, a vector-valued payoff function, and a target set.
The players play a repeated vector-valued game where the first player wants the average vector-valued payoff
(representing the states in which the different objectives are) to converge to the target set
(representing the admissible values for the said states), which the opponent tries to exclude.
The target set is prescribed a priori before the game starts and the aim of the decision-maker
is that the average reward be asymptotically inside the target set.

\paragraph{A theory of approachability in unknown games \textnormal{(i.e., for arbitrary vector-valued bandit problems)}.}
The analysis in approachability has been limited to date to cases where some underlying structure
of the problem is known, namely the vector payoff function $r$ (and some signalling structure
if the obtained payoffs are not observed).
We consider the case of ``unknown games'' where only vector-valued rewards are observed and
there is no a priori assumption on what can and cannot be obtained. In particular, we do not assume that
there is some underlying game structure we can exploit.
In our model at each round, for every action of the decision maker there is a vector-valued reward that is
only assumed to be arbitrary.
The minimization of regret could be extended to this setting (see, e.g., \citealp[Sections~7.5 and~7.10]{CeLu06}).
And we know that the minimization of regret is a special case of approachability. Hence our motivation
question: can a theory of approachability be developed for unknown games?

One might wonder if it is possible to treat an unknown game as a known game with a very large
class of actions and then use approachability. While such lifting is possible in principle, it would
lead to unreasonable time and memory complexity as the dimensionality of the problem will explode.

In such unknown games, the decision maker does not try to approach a pre-specified target set, but
rather tries to approach the best (smallest) target set given the observed vector-valued rewards.
Defining a goal in terms of the actual rewards is standard in online learning, but has not been
pursued (with a few exceptions listed below) in the multi-objective optimization community.

\paragraph{A theory of smallest approachable set in insight.}
Even in known games it may happen that no pre-specified target set is given, e.g., when
the natural target set is not approachable. Typical relaxations are then to consider uniform
expansions of this natural target set or its convex hull. Can we do better?
To answer this question, another property of regret minimization is our source of inspiration. The definition of a no-regret strategy (see, e.g., \citealp{CeLu06}) is that its performance is asymptotically as good as the best constant strategy, i.e., the strategy that selects at each stage the same mixed action. Another way to formulate this claim is that a no-regret strategy performs (almost) as well as the best mixed action in hindsight.
In the approachability scenario, this question can be translated into the existence of a strategy that
approaches the smallest approachable set for a mixed action in hindsight. If the answer is negative (and, unfortunately, it is) the next question is to define a weaker aim that would still be more ambitious
than the typical relaxations considered.

\paragraph{Short literature review.}
Our approach generalizes several existing works. Our proposed strategy can be used for standard
approachability in all the cases where the desired target set is not approachable and where one wonders
what the aim should be. We illustrate this on the problems of global costs introduced by~\citet{GlobalCosts}
and of approachability with sample path constraints as described in the special case of
regret minimization by \citet{Pathwise}.

The algorithm we present does not require projection which is the
Achilles' heel of many approachability-based schemes (it does so
similarly to \citealp{BernShim}).
Our approach is also strictly more general and more ambitious than one recently considered
by~\citet{AFFT}. An extensive comparison to the results by~\citet{BernShim}
and~\citet{AFFT} is offered in Section~\ref{sec:disc}.

\paragraph{Outline.}
This article consists of four parts of about equal lengths.
We first define the problem of approachability in unknown games
and link it to the standard setting of approachability (Section~\ref{sec:setting}).

We then discuss what are the reasonable target sets to consider
(Sections~\ref{sec:descrex} and~\ref{sec:choosetarget}). Section~\ref{sec:descrex} shows by means
of two examples that the best-in-hindsight expansion cannot be achieved while its
convexification can be attained but is not ambitious enough. Section~\ref{sec:choosetarget}
introduces a general class of achievable and ambitious enough targets:
a sort of convexification of some individual-response-based target set.

The third part of the paper (Section~\ref{sec:regretmin})
exhibits concrete and computationally efficient algorithms to achieve the goals
discussed in the first part of the paper.
The general strategy of Section \ref{sec:regretmin}
amounts to playing a (standard) regret minimization in blocks and modifying the
direction as needed; its performance and merits are then studied in detail with respect to
the literature mentioned above. It bears some resemblance with the approach developed by~\citet{BNR}.

Last but not least, the fourth part of the paper revisits two important problems,
for which dedicated methods were created and dedicated articles were written:
regret minimization with global cost functions, and online learning with sample path constraints
(Section \ref{sec:appli}). We show that our general strategy has stronger performance guarantees in
these problems than the ad hoc strategies that had been constructed by the literature.

\section{Setup (``unknown games''), notation, and aim}
\label{sec:setting}

The setting is the one of (classical) approachability, that is, vector payoffs are considered.
The difference lies in the aim. In (classical) approachability theory,
the average $\ol{r}_T$ of the obtained vector payoffs should converge asymptotically
to some target set $\cC$, which can be known to be approachable based on the existence and
knowledge of the payoff function $r$.
In our setting, we do not know whether $\cC$ is approachable because there is no underlying payoff function.
We then ask for convergence to some $\alpha$--expansion of $\cC$, where $\alpha$ should be as small as possible.

\paragraph{Setting: unknown game with vectors of vector payoffs.}
The following game is repeatedly played between two players, who will be
called respectively the decision-maker (or first player) and the opponent (or second player).
Vector payoffs in $\R^d$, where $d \geq 1$, will be considered.
The first player has finitely many actions whose set we denote by $\cA = \{1,\ldots,A\}$.
We assume $A \geq 2$ throughout the paper, to avoid trivialities.
The opponent chooses at each round $t \in \{1,2,\ldots\}$ a vector $m_t = (m_{t,a})_{a \in \cA}$ of vector payoffs
$m_{t,a} \in \R^d$. We impose the restriction that these vectors $m_t$ lie in a convex and bounded set $\cK$ of $(\R^{d})^{\cA}$.
The first player picks at each round $t$ an action $a_t \in \cA$, possibly at random according to
some mixed action $x_t = (x_{t,a})_{a \in \cA}$; we denote by $\Delta(\cA)$ the set of all such mixed actions.
She then receives $m_{t,a_t}$ as a vector payoff. We can also assume that $m_{t,a_t}$ is the only feedback
she gets on $m_t$ and that she does not see the other components of $m_t$ than the one she chose.
(This is called bandit monitoring but can and will be relaxed to a full monitoring as we explain below.)

\begin{remark}
\label{rk:K}
We will not assume that the first player knows $\cK$ (or any bound on the maximal norm of its elements);
put differently, the scaling of the problem is unknown.
\end{remark}

The terminology of ``unknown game'' was introduced in the machine learning literature,
see \citet[Sections~7.5 and 7.11]{CeLu06} for a survey. A game is unknown
(to the decision-maker) when she not only does not observe the vector
payoffs she would have received has she chosen a different pure action (bandit
monitoring) but also when she does not even know the underlying structure of the game,
if any such structure exists. Section~\ref{sec:setting2} will make the latter point
clear by explaining how the classical setting of approachability introduced by~\citet{Bla56}
is a particular case of the setting described above: some payoff function $r$ exists therein and the
decision-maker knows $r$. The strategy proposed by~\citet{Bla56} crucially relies on the knowledge
of $r$. In our setting, $r$ is unknown and even worse, might not even exist.
Section~\ref{sec:regretunknown} (and Section~\ref{sec:app:regretunknown}) will recall how a particular case of approachability
known as minimization of the regret could be dealt with for unknown games.

\paragraph{Formulation of the approachability aim.}
The decision-maker is interested in controlling her average payoff
\[
\widetilde{r}_T = \frac{1}{T} \sum_{t=1}^T m_{t,a_t}\,.
\]
She wants it to approach an as small as possible neighborhood of a given target set $\cC$, which
we assume to be closed.
This concept of neighborhood could be formulated in terms of a general filtration (see Remark~\ref{rk:2}
below); for the sake of concreteness we resort rather to expansions of a base set $\cC$ in some $\ell_p$--norm,
which we denote by $\Arrowvert\,\cdot\,\Arrowvert$, for $0 < p \leq +\infty$.
Formally, we denote by $\cC_\alpha$ the closed $\alpha$--expansion in $\ell_p$--norm of $\cC$:
\[
\cC_\alpha
= \bigl\{ c \in \R^d : \ \ \exists \, c' \in \cC \ \mbox{s.t.} \ \Arrowvert c - c' \Arrowvert_p \leq \alpha \bigr\}
= \bigl\{ c \in \R^d : \ \ \dist_p(c,\cC) \leq \alpha \bigr\}\,.
\]
Here and in the sequel, $\dist_p(\,\cdot\,,S)$ denotes the distance in $\ell_p$--norm to a set $S$.

As is traditional in the literature of approachability and regret minimization,
we consider the smallest set that would have been approachable in hindsight, that is,
had the averages of the vectors of vector payoffs be known in advance:
\[
\ol{m}_T = \frac{1}{T} \sum_{t=1}^T m_t\,, \qquad \mbox{whose components equal} \qquad
\forall \, a \in \cA, \quad \ol{m}_{T,a} = \frac{1}{T} \sum_{t=1}^T m_{t,a}\,.
\]
This notion of ``smallest set'' is somewhat tricky and the first part of this article
will be devoted to discuss it. The model we will consider
is the following one. We fix a target function $\varphi : \cK \to [0,+\infty)$;
it takes $\ol{m}_T$ as argument.
(Section~\ref{sec:choosetarget} will indicate reasonable such choices of $\varphi$.)
It associates with it the $\varphi(\ol{m}_T)$--expansion of $\cC$. Our aim is then
to ensure the almost-sure convergence
\[
\dist_p \bigl( \widetilde{r}_T, \, \cC_{\varphi(\ol{m}_T)} \bigr)
\longrightarrow 0 \qquad \mbox{as} \ \ T \to \infty\,.
\]
As in the definition of classic approachability, uniformity will be required with
respect to the strategies of the opponent: the decision-maker should construct
strategies such that for all $\varepsilon > 0$, there exists a time $T_\varepsilon$
such that for all strategies of the opponent, with probability at least $1-\varepsilon$,
\[
\sup_{T \geq T_\varepsilon} \, \dist_p \bigl( \widetilde{r}_T, \, \cC_{\varphi(\ol{m}_T)} \bigr)
\, \leq \varepsilon\,.
\]

\begin{remark}
\label{rk:2}
More general filtrations $\alpha \in [0,+\infty) \mapsto \cC_\alpha$ could have been considered than
expansions in some norm. By ``filtration'' we mean that $\cC_\alpha \subseteq \cC_{\alpha'}$
for all $a \leq \alpha'$.
For instance, if $0 \in \cC$, one could have considered shrinkages and blow-ups, that is, $\cC_0 = \{0\}$ and
$\cC_\alpha = \alpha \, \cC$ for $\alpha > 0$.
Or, given some compact set $\cB$ with non-empty interior, $\cC_\alpha = \cC + \alpha\cB$ for $\alpha \geq 0$.
But for the sake of clarity and simplicity, we restrict the exposition
to the more concrete case of expansions of a base set $\cC$ in some $\ell_p$--norm.
\end{remark}

\paragraph{Summary: the two sources of unknowness.}
As will become clearer in the concrete examples presented in Section~\ref{sec:appli},
not only the structure of the game is unknown and might even not exist (first source of
unknownness) but also the target is unknown. This second source arises also in known
games, in the following cases: when some natural target (e.g., some best-in-hindsight target)
is proven to be unachievable or when some feasible target is not ambitious enough
(e.g., the least approachable uniform expansion of $\cC$ as will be discussed in
Section~\ref{sec:opportunistic}). What to aim for, then? Convex relaxations are
often considered more manageable and ambitious enough targets;
but we will show that they can be improved upon in general.

See the paragraph ``Discussion'' on page~\pageref{par:discussion}
for more details on these two sources of unknownness in the concrete example of global costs.

\subsection{Two classical relaxations: mixed actions and full monitoring}
We present two extremely classical relaxations of the general setting described above. They come
at no cost but simplify the exposition of our general theory.

\paragraph{The decision-maker can play mixed actions.}
First, because of martingale convergence results, for instance,
the Hoeffding-Azuma inequality, controlling $\widetilde{r}_T$
is equivalent to controlling the averages $\ol{r}_T$ of the conditionally expected payoffs $r_t$, where
\[
r_t =  x_t \odot m_t = \sum_{a \in \cA} x_{t,a} m_{t,a}
\qquad \mbox{and} \qquad
\ol{r}_T = \frac{1}{T} \sum_{t=1}^T r_t = \frac{1}{T} \sum_{t=1}^T x_t \odot m_t\,.
\]
Indeed, the boundedness of $\cK$ and a component-by-component application of the
said inequality ensure that there exists a constant $C$
such that for all $\varepsilon > 0$, for all $T \geq 1$,
for all strategies of the opponent, with probability at least $1-\varepsilon$,
\[
\bigl\Arrowvert \widetilde{r}_T - \ol{r}_T \bigr\Arrowvert_p
\leq C \sqrt{\frac{\ln(d/\varepsilon)}{T}}\,.
\]
Given $\varepsilon > 0$,
we use these inequalities each with $\varepsilon$ replaced by $\varepsilon/T^2$:
a union bound entails that
choosing $T_\varepsilon \geq 2$ sufficiently large so that
\[
\sup_{T \geq T_\varepsilon} C \sqrt{\frac{\ln(d T^2/\varepsilon)}{T}} \leq \varepsilon\,,
\]
we then have, for all strategies of the opponent, with probability at least $1-\varepsilon$,
\begin{equation}
\label{eq:howtogetunif}
\sup_{T \geq T_\varepsilon} \, \bigl\Arrowvert \widetilde{r}_T - \ol{r}_T
\bigr\Arrowvert_p \leq \varepsilon\,.
\end{equation}
Therefore, we may focus on $\ol{r}_T$ instead of $\widetilde{r}_T$
in the sequel and consider equivalently the aim~\eqref{eq:aim} discussed below.

\paragraph{The decision-maker can enjoy a full monitoring.}
Second, the bandit-monitoring assumption can be relaxed to a full monitoring,
at least under some regularity assumptions, e.g., uniform continuity of the
target function~$\varphi$.

Indeed, we assumed that the decision-maker only gets to observe $m_{t,a_t}$ after choosing the component
$a_t \in \cA$ of $m_t$. However, standard estimation techniques presented by \citet{NCBBandits}
and \citet[Sections V.5.a and VI.6]{MSZ94}
provide accurate and unbiased estimators $\widehat{m}_t$ of the whole vectors $m_t$,
at least in the case when the latter only depends on what happened in the past and on
the opponent's strategy but not on the decision-maker's present\footnote{\label{fn:1} However,
such a dependency can still be dealt with in some cases, see, e.g., the
case of regret minimization in Section~\ref{sec:regretunknown} (and Section~\ref{sec:app:regretunknown}): when the dependency
on the decision-maker's present action $a_t$ comes only through an additive term
equal to the obtained payoff, which is known.}
choice of an action $a_t$. The components of these estimators $\widehat{m}_t$ equal, for $a \in \cA$,
\begin{equation}
\label{eq:estmt}
\widehat{m}_{t,a} = \frac{m_{t,a}}{x_{t,a}} \ind_{\{ a = a_t \}}\,,
\end{equation}
with the constraint on mixed actions that $x_{t,a} \geq \gamma_t$ for all $t \geq 1$.
The decision-maker should then base her decisions and apply her strategy on $\widehat{m}_t$,
and eventually choose as a mixed action the convex combination of the
mixed action she would have freely chosen based on the $\widehat{m}_t$, with weight $1-\gamma_t$, and
of the uniform distribution, with weight $\gamma_t$.

Indeed, by the Hoeffding-Azuma inequality, the averages of the
vector payoffs and of the vectors of vector payoffs based respectively
on the $m_{t,a}$ and on the $\widehat{m}_{t,a}$, as well as the corresponding
average payoffs obtained by the decision-maker, differ by something of the
order of
\[
\frac{1}{T} \left( \sqrt{\sum_{t=1}^T \frac{1}{\gamma_t^2}} + \sum_{t=1}^T \gamma_t \right)
\sqrt{\ln(1/\varepsilon)}
\]
for each $T$ with probability at least $1-\varepsilon$, and uniformly
over the opponent's strategies. These differences vanish
as $T \to \infty$, e.g., at a $T^{-2/3}$ rate when
the $\gamma_t$ are of the order of $t^{-1/3}$.
A treatment similar to the one performed to obtain~\eqref{eq:howtogetunif}
can also be applied to obtain statements with uniformities both with respect to
time $T \geq T_\varepsilon$ and to the strategies of opponent.

Because our aim involves the average payoffs $\ol{m}_T$ via the target function $\varphi$
as in $\cC_{\varphi(\ol{m}_T)}$, we require the uniform continuity of $\varphi$
for technical reasons, i.e., to carry over the negligible differences between the average payoffs and their estimation
in the approachability aim. (This assumption of uniform continuity can easily be dropped based on
the result of Theorem~\ref{th:main}; details are omitted.)

\paragraph{Conclusion: approachability aim.}
The decision-maker, enjoying a full monitoring, should construct a strategy such that
almost surely and uniformly over the opponent's strategies,
\begin{equation}
\label{eq:aim}
\dist_p \bigl( \ol{r}_T, \, \cC_{\varphi(\ol{m}_T)} \bigr)
\longrightarrow 0 \qquad \mbox{as} \ \ T \to \infty\,;
\end{equation}
that is, for all $\varepsilon > 0$, there exists $T_\varepsilon > 0$ such that
for all strategies of the opponent, with probability at least $1-\epsilon$,
\[
\sup_{T \geq T_\varepsilon} \, \dist_p \bigl( \ol{r}_T, \, \cC_{\varphi(\ol{m}_T)} \bigr)
\, \leq \varepsilon\,.
\]
We note that we will often be able to provide stronger, uniform and deterministic controls, of the form:
there exists a function $\zeta : \{1,2,3,\ldots\} \to [0,+\infty)$ such that
$\zeta(T) \to 0$ and for all strategies of the opponent,
\[
\dist_p \bigl( \ol{r}_T, \, \cC_{\varphi(\ol{m}_T)} \bigr) \leq \zeta(T)\,.
\]
To conclude this section,
we point out again that the two relaxations considered come at no cost in the generality
of setting: they are only intended to simplify and clarify the exposition.
Full details of this standard reduction from the case of bandit monitoring to
full monitoring are omitted because they are classical, though lengthy and technical, to expose.

\subsection{Link with approachability in known finite games}
\label{sec:setting2}

We link here our general setting above with the
classical setting considered by \citet{Bla56}.
Therein the decision-maker and the opponent have finite sets of actions $\cA$ and $\cB$,
and choose at each round $t$ respective pure actions $a_t \in \cA$ and $b_t \in \cB$,
possibly at random according to some mixed actions
$x_t = (x_{t,a})_{a \in \cA} \in \Delta(\cA)$ and $y_t = (y_{t,b})_{b \in \cB} \in \Delta(\cB)$.
A payoff function $r : \cA \times \cB \to \R^d$ is given and is
multilinearly extended to $\Delta(\cA) \times \Delta(\cB)$ according to
\[
\forall \, (x,y) \in \Delta(\cA) \times \Delta(\cB),
\qquad r(x,y)=\sum_{a \in \cA} \sum_{b \in \cB} x_a \, y_b \, r(a,b)\, .
\]
From the decision-maker viewpoint, the game takes place as if the opponent
was choosing at each round the vector of vector payoffs
\[
m_t = r(\,\cdot\,,b_t) = \bigl( r(a,b_t) \bigr)_{a \in \cA}\,.
\]
A target set $\cC$ is to be approached, that is, the convergence
\[
\widetilde{r}_T = \frac{1}{T} \sum_{t=1}^T r(a_t,b_t) \, \longrightarrow \, \cC
\quad \mbox{a.s.}
\]
should hold uniformly over the opponent's strategies.
(Of course, as recalled above, we can equivalently require the uniform convergence of $\ol{r}_T$
to $\cC$.)

A necessary and sufficient condition for
this when $\cC$ is
closed and convex is that for all $y \in \Delta(\cB)$, there exists some
$x \in \Delta(\cA)$ such that $r(x,y) \in \cC$. Of course, this condition, called the dual condition for
approachability, is not always met. However, in view of the dual condition,
the least approachable $\alpha$--expansion in $\ell_p$--norm of such a non-empty, closed, and convex set $\cC$
is given by
\begin{equation}
\label{eq:alphaunif}
\alpha_{\unif} = \max_{y \in \Delta(\cB)} \, \min_{x \in \Delta(\cA)}  \, \dist_p\bigl(r(x,y),\,\cC\bigr)\,.
\end{equation}
Approaching $\cC_{\alpha_{\unif}}$ corresponds to considering the constant target function
$\varphi \equiv \alpha_{\unif}$ in~\eqref{eq:aim}.
Better (uniformly smaller) choices of target functions exist,
as will be discussed in Section~\ref{sec:opportunistic}. This will be put in
correspondence therein with what is called ``opportunistic approachability.''

\paragraph{The knowledge of $r$ is crucial (a first strategy).}
The general strategies used to approach $\cC$ (or $\cC_{\alpha_{\unif}}$ when $\cC$ is not approachable
and $p=2$)
rely crucially on the knowledge of $r$.

Indeed, the original strategy of \citet{Bla56} proceeds as follows:
at round $t+1$,
it first computes the projection $\widetilde{c}_t$
of $\widetilde{r}_t$ onto $\cC$. Then it picks $a_{t+1}$ at random
according to a mixed action $x_{t+1}$ such that
\begin{equation}
\label{eq:howtoappr}
\forall \, y \in \Delta(\cB), \qquad \bigl\langle
\widetilde{r}_t - \widetilde{c}_t, \,\,
r(x_{t+1},y) - \widetilde{c}_t \bigr\rangle \leq 0\,.
\end{equation}
When $\cC$ is approachable, such a mixed action always exists; 
 one can take, for instance,
\[
x_{t+1} \in \argmin_{x \in \Delta(\cA)} \max_{y \in \Delta(\cB)}
\bigl\langle \widetilde{r}_t - \widetilde{c}_t, \,\, r(x,y) \bigr\rangle\,.
\]
In general, the strategy thus heavily depends on the knowledge of $r$.

When $\cC$ is not approachable and $p=2$, the set $\cC_{\alpha_{\unif}}$ is the target and the
choice right above is still suitable to approach $\cC_{\alpha_{\unif}}$ in $\ell_2$--norm.
Indeed, the projection $\widetilde{d}_t$ of
$\widetilde{r}_t$ onto $\cC_{\alpha_{\unif}}$ is such that
$\widetilde{r}_t - \widetilde{d}_t$ is proportional to $\widetilde{r}_t - \widetilde{c}_t$,
thus
\[
\argmin_{x \in \Delta(\cA)} \max_{y \in \Delta(\cB)}
\bigl\langle \widetilde{r}_t - \widetilde{c}_t, \,\, r(x,y) \bigr\rangle
= \argmin_{x \in \Delta(\cA)} \max_{y \in \Delta(\cB)}
\bigl\langle \widetilde{r}_t - \widetilde{d}_t, \,\, r(x,y) \bigr\rangle\,.
\]

\paragraph{The knowledge of $r$ is crucial (a second strategy).}
There are other strategies to perform approachability in known finite games,
though the one described above may be the most popular one.
For instance, \citet{BernShim} propose a strategy based on the dual
condition for approachability, which still performs approachability at the
optimal $1/\sqrt{T}$ rate. We discuss it in greater details and
generalize it to the case of unknown games in Section~\ref{sec:disc}.
For now, we describe it shortly only to show how heavily it relies on
the game $r$ being known. Assume that $\cC$ is approachable.
At round $t=1$, choose an arbitrary mixed action $x_1$ to draw $a_1$
and choose an arbitrary mixed action $\ty_1 \in \cB$. For rounds $t+1$,
assume that mixed actions $\ty_1,\ldots,\ty_t \in \Delta(\cB)$
have been chosen by the decision-maker in addition to the pure actions $b_1,\ldots,b_t$
actually played by the opponent, and that corresponding mixed actions $\tx_1,\ldots,\tx_t$
such that $r\bigl(\tx_s,\ty_s\bigr) \in \cC$ have been chosen as well. Denoting
\[
\delta_t = \sum_{\tau=1}^t r(a_\tau,b_\tau) - \sum_{\tau=1}^t r\bigl(\tx_\tau,\ty_\tau\bigr)\,,
\]
the strategy selects
\[
x_{t+1} \in \argmin_{x \in \Delta(\cA)} \max_{y \in \Delta(\cB)} \langle \delta_t, \,\, r(x,y) \rangle
\qquad \mbox{and} \qquad
\ty_{t+1} \in \argmax_{y \in \Delta(\cB)} \min_{x \in \Delta(\cA)} \langle \delta_t, \,\, r(x,y) \rangle\,,
\]
as well as $\tx_{t+1} \in \Delta(\cA)$ such that $r\bigl(\tx_{t+1},\ty_{t+1}\bigr) \in \cC$,
where such an $\tx_{t+1}$ exists since $\cC$ is approachable.
Thus, it is crucial that the strategy knows $r$; however, that $\cC$ be approachable
is not essential: in case it is not approachable and $\cC_{\alpha_{\unif}}$ is to be approached
instead, it suffices to pick
\[
\tx_{s} \in \argmin_{x \in \Delta(\cA)} \dist_p\Bigl(r\bigl(x,\ty_s\bigr),\,\cC\Bigr)\,,
\]
so that $r\bigl(\tx_s,\ty_s\bigr) \in \cC_{\alpha_{\unif}}$. Any $\ell_p$--norm is suitable
for this argument.

\subsection{Link with regret minimization in unknown games}
\label{sec:regretunknown}

The problem of regret minimization can be encompassed as an instance of approachability.
For the sake of completeness, we recall in Appendix~\ref{sec:app:regretunknown} why the knowledge of the payoff
structure is not crucial for this very specific problem. This, of course, is not the case at all
for general approachability problems.

\section{Two toy examples to develop some intuition}
\label{sec:descrex}

The examples presented below will serve as guides to
determine suitable target functions $\varphi : \cK \to [0,+\infty)$,
that is, target functions for which the convergence~\eqref{eq:aim}
can be guaranteed and that are ambitious (small) enough, in a sense that will be made formal
in the next section.

\paragraph{Example 1: minimize several costs at a time.}
The following example is a toy modeling of a case when the first player has to perform several
tasks simultaneously and incurs a loss (or a cost) for each of them; we assume that her overall loss is
the worst (the largest) of the losses thus suffered.

For simplicity, and because it will be enough for our purpose,
we will assume that the decision-maker only has two actions, that is, $\cA = \{ 1,2 \}$,
while the opponent is restricted to only pick convex combinations of the following vectors of vector payoffs:
\begin{align*}
& m^{\dag} = \big(m^{\dag}_a\big)_{a \in \{1,2\}} \in \R^{2 \times 2} \qquad
\mbox{with} \quad m^{\dag}_1 = \left[\begin{array}{c} 3 \\ 4 \end{array}\right] \quad
\mbox{and} \quad m^{\dag}_2 = \left[\begin{array}{c} 0 \\ 5 \end{array}\right], \\
& \phantom{r} \\
\mbox{and} \qquad\qquad &
m^{\sharp} = \big(m^{\sharp}_a\big)_{a \in \{1,2\}} \in \R^{2 \times 2} \qquad
\mbox{with} \quad m^{\sharp}_1 = \left[\begin{array}{c} 4 \\ 3 \end{array}\right] \quad
\mbox{and} \quad m^{\sharp}_2 = \left[\begin{array}{c} 5 \\ 0 \end{array}\right].
\end{align*}
The opponent's actions can thus be indexed by $\nu \in [0,1]$,
where the latter corresponds to
the vector of vectors
\[
m^{(\nu)} = \nu m^{\dag} + (1-\nu)m^{\sharp} \,\, \in \R^{2 \times 2}\,.
\]

The base target set $\cC$ is the negative orthant $\cC = (-\infty,0]^2$ and
its $\alpha$--expansions in the supremum norm ($p = +\infty$)
are $\cC_\alpha = (-\infty,\alpha]^2$. A graphical representation
of these expansions and of the vectors $m^{\dag}$ and $m^{\sharp}$
is provided in Figure~\ref{FG:Fig1-1}.

\begin{figure}[h!]
\begin{center}
\includegraphics{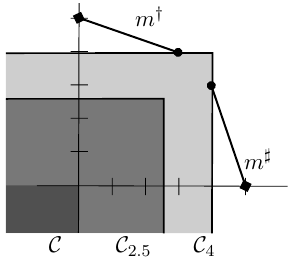}
\caption{\label{FG:Fig1-1} Graphical representation of $m^\dag$ and $m^\sharp$
and of different expansions $\cC_\alpha$.}
\end{center}
\end{figure}

\paragraph{Example 2: control absolute values.} In this example, the decision-maker still has only
two actions, $\cA = \{1,2\}$, and gets scalar rewards, i.e., $d = 1$. The aim is to minimize the absolute value
of the average payoff, i.e., to control the latter from above and from below (for instance, because these
payoffs measure deviations in either direction from a desired situation).

Formally, the opponent chooses vectors $m_t \in \R^2$, which we assume to actually lie in $\cK = [-1,1]^2$.
The product $\odot$ is then simply the standard inner product over $\R^2$. We
consider $\cC = \{0\}$ as a base target set to be approached. Its expansions (in any $\ell_p$--norm) are
$\cC = [-\alpha,\alpha]$, for $\alpha \geq 0$.

\subsection{The smallest set in hindsight cannot be achieved in general}
\label{sec:varphistar}

We denote by $\varphi^\star : \cK \to [0,+\infty)$ the function that associates
with a vector of vector payoffs $m \in \cK$
the index of the smallest $\ell_p$--expansion of $\cC$ containing a convex combination of its components:
\begin{equation}
\label{eq:phistar}
\varphi^\star(m) = \min \Big\{ \alpha \geq 0 : \ \ \exists \, x \in \Delta(\cA) \ \mbox{s.t.} \
x \odot m \in \cC_\alpha \Bigr\} = \min_{x \in \Delta(\cA)} \dist_p(x \odot m,\,\cC)\,,
\end{equation}
the infimum being achieved by continuity. This defines a function $x^\star$:
\[
\forall \, m \in \cK, \qquad \exists \, x^\star(m) \in \Delta(\cA) : \qquad
x^\star(m) \odot m \in \cC_{\varphi^\star(m)}\,.
\]

\begin{lemma}
\label{lm:notach}
In Examples~1 and~2,
the convergence~\eqref{eq:aim} cannot be achieved for $\varphi^\star$ against
all strategies of the opponent.
\end{lemma}

The proofs (located in Appendix~\ref{sec:Expost-notach})
reveal that the difficulty in~\eqref{eq:aim} is that it should
hold along a whole path, while the value of $\varphi^\star\bigl( \ol{m}_T \bigr)$
can change more rapidly than the average payoff vectors $\ol{r}_T$ do.

They will formalize the following proof scheme.
To accommodate a first situation, which lasts a large number $T'$
of stages, the decision-maker should play in a given way;
but then, the opponent changes drastically his strategy and from where the decision-maker
is she cannot catch up and is far from the target at stage $2T'$. The situation is repeated.

\subsection{A concave relaxation is not ambitious enough}
\label{sec:cav}
\label{sec:ex}

A classical relaxation in the game-theory literature for unachievable
targets (see, e.g., how \citealp{Pathwise} proceed)
is to consider concavifications.
Can the convergence~\eqref{eq:aim} hold with $\varphi = \cav[\varphi^\star]$,
the concavification of $\varphi^\star$? The latter is
defined as the least concave function $\cK \to [0,+\infty]$ above $\varphi^\star$.
The next section will show that it is indeed always the
case but we illustrate on our examples why such a goal is not ambitious enough.
(The proof of the lemma below can be found in Appendix~\ref{sec:cavnotamb}.)

\begin{lemma}
\label{lm:cavnotamb}
In Examples~1 and~2,
the decision-maker has a mixed action $(x,1-x)$
that she can play at each round to ensure the convergence~\eqref{eq:aim}
for a target function $\varphi_x$ that is uniformly smaller than $\varphi^\star$,
and even strictly smaller at some points.
\end{lemma}

\section{A general class of ambitious enough target functions}
\label{sec:choosetarget}

The previous section showed on examples that the best-in-hindsight target function $\varphi^\star$
was too ambitious a goal
 while its concavification $\cav[\varphi^\star]$ seemed not ambitious enough.
In this section, based on the intuition given by the formula for concavification, we provide a whole class
of achievable target functions, relying on a parameter: a response function $\Psi$.

In the definition below, by uniformity over strategies of the opponent player, we mean the uniform convergence stated
right after~\eqref{eq:aim}.
We denote by $\cG_\varphi$ the graph of the set-valued mapping $m \in \cK \mapsto \cC_{\varphi(m)}$:
\[
\cG_\varphi = \Bigr\{ (m,r) \in \cK \times \R^d \ \ \mbox{\rm s.t.} \ \ r \in \cC_{\varphi(m)} \Bigr\}
\subset (\R^{d})^{\cA} \times \R^d\,.
\]

\begin{definition}
\label{def:defach}
A continuous target function $\varphi : \cK \to [0,+\infty)$ is achievable if the decision-maker has a strategy ensuring
that, uniformly over all strategies of the opponent player,
\begin{equation}
\label{eq:cv}
\dist_p \bigl( \ol{r}_T, \, \cC_{\varphi(\ol{m}_T)} \bigr) \longrightarrow 0 \qquad \mbox{as} \ \ T \to \infty\,.
\end{equation}
More generally, a (possibly non-continuous) target function $\varphi : \cK \to [0,+\infty)$ is achievable if
$\cG_\varphi$ is approachable for the game with payoff function $(x,m)
\in \Delta(\cA) \times \cK \mapsto (m, \, x \odot m)$, that is,
if uniformly over all strategies of the opponent player,
\begin{equation}
\label{eq:cvgal}
(\ol{m}_T,\,\ol{r}_T) \longrightarrow
\cG_\varphi \qquad \mbox{as} \ \ T \to \infty\,.
\end{equation}
\end{definition}

We always have that~\eqref{eq:cv} entails \eqref{eq:cvgal}, with or without continuity of $\varphi$.
The condition~\eqref{eq:cvgal} is however less restrictive in general
and it is useful in the case of non-continuous target functions (e.g., to avoid lack of convergence
due to errors at early stages).
But for continuous target functions $\varphi$, the two definitions~\eqref{eq:cv} and~\eqref{eq:cvgal}
are equivalent. We prove these two facts in Section~\ref{sec:B1} in the appendix.

The defining equalities~\eqref{eq:phistar} for $\varphi^\star$ show that this function is continuous
(it is even a Lipschitz function
with constant $1$ in the $\ell_p$--norm). We already showed in Section~\ref{sec:varphistar} that
the target function $\varphi^\star$ is not achievable in general.

To be able to compare target functions, we consider the following definition and notation.

\begin{definition}
\label{def:prec}
A target function $\varphi : \cK \to [0,+\infty)$ is strictly smaller than
another target function $\varphi'$ if $\varphi \leq \varphi'$
and there exists $m \in \cK$ with $\varphi(m) < \varphi'(m)$.
We denote this fact by $\varphi \prec \varphi'$.
\end{definition}

For instance, in Lemma~\ref{lm:cavnotamb}, we had $\varphi_x \prec \varphi^\star$.

\subsection{The target function $\cav[\varphi^\star]$ is always achievable}

We show below that the target function $\cav[\varphi^\star]$ is always achievable...
But of course, Section~\ref{sec:cav} already showed that $\cav[\varphi^\star]$ is not ambitious enough:
in Examples~1 and~2, there exist easy-to-construct
achievable target functions $\varphi$ with $\varphi \prec \cav[\varphi^\star]$.
We however provide here a general study of the achievability of $\cav[\varphi^\star]$ as it sheds light on how to achieve
more ambitious target functions.

So, we now only ask for convergence of $(\ol{m}_T,\,\ol{r}_T)$ to the convex hull of $\cG_{\varphi^\star}$,
not to $\cG_{\varphi^\star}$ itself. Indeed,
this convex hull is exactly the graph $\cG_{\cav[\varphi^\star]}$, where $\cav[\varphi^\star]$ is the
concavification of $\varphi^\star$, defined as the least concave function $\cK \to [0,+\infty]$
above $\varphi^\star$. Its variational
expression reads
\begin{equation}
\label{eq:cavphi}
\cav[\varphi^\star](m) =
\sup \! \left\{ \sum_{i \leq N} \lambda_i \, \varphi^\star(m_i) : \ \
N \geq 1 \ \ \mbox{and} \ \ \sum_{i \leq N} \lambda_i m_i = m \right\},
\end{equation}
for all $m \in \cK$, where the supremum is over all finite
convex decompositions of $m$ as elements of $\cK$
(i.e., the $m_i$ belong to $\cK$ and the $\lambda_i$ factors are nonnegative and sum up to $1$).
By a theorem of Fenchel and Bunt (see~\citealp[Theorem~1.3.7]{ConvexAna})
we could actually further impose that $1 \leq N \leq dA+1$.
In general, $\cav[\varphi^\star]$ is not continuous; it is however so when, e.g., $\cK$ is a polytope.

\begin{lemma}
\label{lm:ach}
The target function $\cav[\varphi^\star]$ is always achievable.
\end{lemma}

\proofof{sketch; when $\cK$ is known}
When the decision-maker knows $\cK$ (and only in this case), she can compute $\cav[\varphi^\star]$ and its graph
$\cG_{\cav[\varphi^\star]}$. As indicated after Definition~\ref{def:defach}, it suffices to
show that the convex set $\cG_{\cav[\varphi^\star]}$ is approachable
for the game with payoffs $(x,m) \in \Delta(\cA) \times \cK \mapsto (m, \, x \odot m)$; the decision-maker
should then play any strategy approaching $\cG_{\cav[\varphi^\star]}$.
Note that $\varphi^\star$ is continuous, that $\cG_{\varphi^\star}$ is thus a closed set,
and that $\cG_{\cav[\varphi^\star]}$ is a closed convex set containing $\cG_{\varphi^\star}$.
Now, the characterization of approachability by~\citet{Bla56} for closed convex sets (recalled
already in Section~\ref{sec:setting2}) states that for all $m \in \cK$, there should exist $x \in \Delta(\cA)$
such that $(m, \, x \odot m) \in \cG_{\cav[\varphi^\star]}$. But by the definition~\eqref{eq:phistar},
we even have $\bigl(m, \, x^\star(m) \odot m \bigr) \in \cG_{\varphi^\star}$, which concludes the proof.
\end{proof}

We only proved Lemma~\ref{lm:ach} under the assumption that the decision-maker knows $\cK$,
a restriction which we are however not ready to consider as indicated in Remark~\ref{rk:K}. Indeed, she
needs to know $\cK$ to compute $\cG_{\varphi^\star}$ and the needed projections onto this set to implement
Blackwell's approachability strategy. Some other approachability strategies may not require this knowledge,
e.g., a generalized version of the one of \citet{BernShim} based on the dual condition for approachability
(see Section~\ref{sec:setting2} for their original version, see Section~\ref{sec:BS-gener} for our generalization).

But anyway, we chose not to go into these details now because at least in the case when $\cC$ is convex,
Lemma~\ref{lm:ach} will anyway
follow from Lemmas~\ref{lm:xstar} and~\ref{lm:phixstarnotopti} (or Theorem~\ref{th:main}) below,
which are proved independently and wherein no knowledge of $\cK$ is assumed\footnote{Indeed,
the functions $x^\star$ and $\varphi^\star$ at hand therein
are independent of $\cK$ as they are defined for each $m \in \cK$ as the solutions of some
optimization program that only depends on this specific $m$ and on $\cC$, but not on $\cK$.}.
Even better, they prove the strongest notion of convergence~\eqref{eq:cv} of Definition~\ref{def:defach},
irrespectively of the continuity or lack of continuity of $\cav[\varphi^\star]$.

\subsection{An example of a more ambitious target function}
\label{sec:moreambitious}

By~\eqref{eq:phistar} we can rewrite~\eqref{eq:cavphi} as
\[
\cav[\varphi^\star](m) = \sup \left\{ \sum_{i \leq N} \lambda_i \, \dist_p\big(x^\star(m_i) \odot m_i,\,\cC\big) : \ \
N \geq 1 \ \ \mbox{and} \ \
\sum_{i \leq N} \lambda_i m_i = m \right\} .
\]
Now, whenever $\cC$ is convex, the function $\dist_p(\,\cdot\,,\,\cC\big)$ is convex as well over $\R^d$;
see, e.g., \citet[Example~3.16]{BoydV}. Therefore, denoting by $\varphi^{x^\star}$ the function defined as
\begin{equation}
\label{eq:phixstar}
\varphi^{x^\star}(m) = \sup \left\{ \dist_p\!\Biggl( \sum_{i \leq N} \lambda_i \, x^\star(m_i) \odot m_i,\,\,\cC \Biggr) : \ \
N \geq 1 \ \ \mbox{and} \ \
\sum_{i \leq N} \lambda_i m_i = m \right\}
\end{equation}
for all $m \in \cK$, we have $\varphi^{x^\star} \leq \cav[\varphi^\star]$.
The two examples considered in Section~\ref{sec:ex} actually show that
this inequality can be strict at some points. We summarize these facts in the lemma below,
whose proof can be found in Appendix~\ref{sec:lm:xstar}.
That $\varphi^{x^\star}$ is achievable is a special case of Lemma~\ref{lm:phixstarnotopti}
stated in the next subsection, where a class generalizing the form of $\varphi^{x^\star}$
will be discussed.

\begin{lemma}
\label{lm:xstar}
The inequality $\varphi^{x^\star} \leq \cav[\varphi^\star]$ always holds when $\cC$ is convex.
For Examples~1 and~2, we even have $\varphi^{x^\star} \prec \cav[\varphi^\star]$.
\end{lemma}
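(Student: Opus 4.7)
The plan is to handle the two assertions separately. The pointwise inequality $\varphi^{x^\star} \leq \cav[\varphi^\star]$ will follow from a single application of Jensen's inequality exploiting the convexity of $r \mapsto \dist_p(r,\cC)$; the strict-inequality clause then reduces to invoking the two explicit instances of Section~\ref{sec:ex}.

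For the inequality, I would fix $m \in K$ and an arbitrary finite convex decomposition $m = \sum_{i \leq N} \lambda_i m_i$ with each $m_i \in K$. By the very definition of $x^\star(m_i)$ in~\eqref{eq:phistar},
\[
\dist_p\bigl( x^\star(m_i) \odot m_i,\,\cC \bigr) = \varphi^\star(m_i)\,.
\]
Since $\cC$ is convex, the map $r \mapsto \dist_p(r,\cC)$ is convex on $\R^d$ (the fact already quoted just before~\eqref{eq:phixstar}), and Jensen's inequality applied to the convex combination of the points $x^\star(m_i) \odot m_i$ yields
\[
\dist_p\!\Bigl( \sum_{i \leq N} \lambda_i \, x^\star(m_i) \odot m_i,\,\,\cC \Bigr)
\leq \sum_{i \leq N} \lambda_i \, \dist_p\bigl( x^\star(m_i) \odot m_i,\,\cC \bigr)
= \sum_{i \leq N} \lambda_i \, \varphi^\star(m_i)
\leq \cav[\varphi^\star](m)\,,
\]
the last step being the variational formula~\eqref{eq:cavphi}. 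Taking the supremum of the left-hand side over all admissible decompositions of $m$ gives $\varphi^{x^\star}(m) \leq \cav[\varphi^\star](m)$, as desired.

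For the strict-inequality claim, I would rely on the two examples worked out in Section~\ref{sec:ex}. In each case one exhibits a specific $m \in K$ on which the supremum in~\eqref{eq:cavphi} is attained by a decomposition whose associated points $x^\star(m_i) \odot m_i$ are \emph{not} collinear with their nearest point in $\cC$, so that the Jensen step of the first half is strict on that decomposition. The principal obstacle, and the reason this half is not purely a one-liner, is that one must control the \emph{entire} supremum defining $\varphi^{x^\star}(m)$, not merely the contribution of the decomposition that maximises $\cav[\varphi^\star](m)$: a priori some other decomposition of the same $m$ might recover the full value $\cav[\varphi^\star](m)$. In the low-dimensional examples of Section~\ref{sec:ex}, however, $K$, $\cC$ and the selector $x^\star$ are explicit enough to resolve this supremum in closed form and to verify directly that $\varphi^{x^\star}(m) < \cav[\varphi^\star](m)$ at the chosen $m$, which is sufficient to conclude $\varphi^{x^\star} \prec \cav[\varphi^\star]$.
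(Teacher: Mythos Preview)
Your proposal is correct and matches the paper's approach essentially line for line: the pointwise inequality is derived exactly as in the paragraph preceding the lemma, via convexity of $r \mapsto \dist_p(r,\cC)$ and Jensen, and the strict-inequality clause is deferred to the explicit computations of Section~\ref{sec:ex}, where both $\varphi^{x^\star}$ and $\cav[\varphi^\star]$ are worked out in closed form (e.g., $\varphi^{x^\star}=\alpha_1$ versus $\cav[\varphi^\star]\equiv 4$ in Example~1). Your remark that the real work lies in controlling the full supremum defining $\varphi^{x^\star}$, not just one decomposition, is exactly the point the paper addresses by those explicit calculations.
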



\subsection{A general class of achievable target functions}
\label{sec:truegoal}

The class is formulated by generalizing the definition~\eqref{eq:phixstar}:
we call response function any function $\Psi : \cK \to \Delta(\cA)$ and
we replace in~\eqref{eq:phixstar} the specific response function $x^\star$ by any response function~$\Psi$.

\begin{definition}
The target function $\varphi^\Psi$ based on the response function~$\Psi$ is defined, for all $m \in \cK$, as
\begin{equation}
\label{eq:phiPsi}
\varphi^\Psi(m) = \sup \left\{ \dist_p\!\left(\sum_{i \leq N} \lambda_i \, \Psi(m_i) \odot m_i,\,\,\cC\right) : \ \
N \geq 1 \ \ \mbox{and}
\ \ \sum_{i \leq N} \lambda_i m_i = m \right\}.
\end{equation}
\end{definition}

\begin{lemma}
\label{lm:phixstarnotopti}
For all response functions $\Psi$, the target functions $\varphi^\Psi$ are achievable.
\end{lemma}

The lemma actually follows from
Theorem~\ref{th:main} below, which provides an explicit and efficient strategy to achieve
any $\varphi^\Psi$, in the stronger sense~\eqref{eq:cv} irrespectively of the continuity or lack of
continuity of $\varphi^\Psi$.
For now, we provide a sketch of proof (under an additional assumption
of Lipschitzness for $\Psi$) based on calibration, because it
further explains the intuition behind~\eqref{eq:phiPsi}.
It also advocates why the $\varphi^\Psi$ functions are reasonable targets:
resorting to some auxiliary calibrated strategy outputting accurate predictions $\hat{m}_t$ (in the sense
of calibration) of the vectors $m_t$ almost amounts to knowing in advance the $m_t$. And with
such a knowledge, what can we get? \\

\proofof{sketch; when $\Psi$ is a Lipschitz function}
We will show below that there exists a constant $L'$ ensuring the following:
given any $\delta > 0$, there exists randomized strategy of the decision-maker
such that for all $\varepsilon > 0$, there exists a time $T_\varepsilon$ such that
for all strategies of the opponent, with probability at least $1-\varepsilon$,
\begin{equation}
\label{eq:calibrgoal}
\sup_{T \geq T_\varepsilon} \,\, \dist_p \bigl( \ol{r}_T, \, \cC_{\varphi^\Psi(\ol{m}_T)} \bigr) \leq
L' (\delta + \varepsilon) \qquad \mbox{a.s.}
\end{equation}
In terms of approachability theory (see, e.g., \citealp{Per} for a survey),
this means that $\cG_{\varphi^\Psi}$ is in particular an $L'\delta$--approachable set for
all $\delta >0$, thus a $0$--approachable set. But $0$--approachability and approachability are
two equivalents notions (a not-so-trivial fact when the sets at hand are not closed convex sets).
That is, $\cG_{\varphi^\Psi}$ is approachable, or put differently, $\varphi^\Psi$ is achievable.

Indeed, fixing $\delta > 0$, there exists a randomized strategy picking
predictions $\widehat{m}_t$ among finitely many
elements $m^{[j]} \in \cK$, where $j \in \{1,\ldots,N_\delta\}$ so that
the so-called calibration score is controlled:
for all $\varepsilon > 0$, there exists a time $T_\varepsilon$ such that
for all strategies of the opponent, with probability at least $1-\varepsilon$,
\begin{equation}
\label{eq:calibrscore}
\sup_{T \geq T_\varepsilon} \,\, \sum_{j=1}^{N_\delta} \left\Arrowvert \frac{1}{T} \sum_{t=1}^T
\ind_{ \bigl\{ \widehat{m}_t = m^{[j]} \bigr\} } \bigl( \widehat{m}_t - m_t \bigr) \right\Arrowvert_p
\ \ \leq \delta + \varepsilon \qquad \mbox{a.s.};
\end{equation}
see\footnote{Actually, the latter reference only considers the case of calibrated predictions
of elements in some simplex, but it is clear from the method used in~\citet{MaSt10} ---
a reduction to a problem of approachability ---
that this can be performed for all subsets of compact sets, such as $\cK$ here,
with the desired uniformity over the opponent's strategies; see
also \citet[Appendix~B]{JMLR14}. The result holds
for any $\ell_p$--norm by equivalence of norms on vector spaces of finite dimension,
even if the original references considered the $\ell_1$ or $\ell_2$--norms only.}~\citet{FoVo98}.
Now, the main strategy, based on such an auxiliary calibrated strategy, is to play $\Psi(\widehat{m}_t)$
at each round. The average payoff of the decision-maker is thus
\[
\ol{r}_T = \frac{1}{T} \sum_{t=1}^T \Psi\bigl( \widehat{m}_t \bigr) \odot m_t\,.
\]
We decompose it depending on the predictions $\widehat{m}_t$ made:
for each $j \in \{1,\ldots,N_\delta\}$, the average number of
times $m^{[j]}$ was predicted and the average vectors of vector payoffs obtained
on the corresponding rounds equal
\[
\widehat{\lambda}_{j,T} = \frac{1}{T} \sum_{t=1}^T \ind_{ \bigl\{ \widehat{m}_t = m^{[j]} \bigr\} }
\qquad \mbox{and} \qquad
\ol{\widehat{m}}^{j,T} = \frac{\sum_{t=1}^T m_t \, \ind_{ \bigl\{ \widehat{m}_t = m^{[j]} \bigr\} }}{
\sum_{t=1}^T \ind_{ \bigl\{ \widehat{m}_t = m^{[j]} \bigr\} }}
\]
whenever $\widehat{\lambda}_{j,T} > 0$, otherwise, we take an arbitrary value for
$\ol{\widehat{m}}^{j,T}$. In particular,
\[
\ol{m}_T = \sum_{j=1}^{N_\delta} \widehat{\lambda}_{j,T} \, \ol{\widehat{m}}^{j,T}
\qquad \mbox{and} \qquad
\ol{r}_T = \sum_{j=1}^{N_\delta} \widehat{\lambda}_{j,T} \,
\Psi\bigl( m^{[j]} \bigr) \odot \ol{\widehat{m}}^{j,T}\,.
\]
Using this convex decomposition of $\ol{m}_T$ in terms of elements of $\cK$,
the very definition of $\varphi^\Psi$ leads to
\[
\sum_{j=1}^{N_\delta} \widehat{\lambda}_{j,T} \,\,
\Psi\Bigl( \ol{\widehat{m}}^{j,T} \Bigr) \odot \ol{\widehat{m}}^{j,T}
\,\in\, \cC_{\varphi^\Psi(\ol{m}_T)}\,,
\]
hence
\begin{multline*}
\dist_p \bigl( \ol{r}_T, \, \cC_{\varphi^\Psi(\ol{m}_T)} \bigr)
\leq \left\Arrowvert \ol{r}_T -
\sum_{j=1}^{N_\delta} \widehat{\lambda}_{j,T} \,
\Psi\Bigl( \ol{\widehat{m}}^{j,T} \Bigr) \odot \ol{\widehat{m}}^{j,T}
\right\Arrowvert_p \\
= \left\Arrowvert
\sum_{j=1}^{N_\delta} \widehat{\lambda}_{j,T}
\Bigl( \Psi\bigl( m^{[j]} \bigr)
-
\Psi\Bigl( \ol{\widehat{m}}^{j,T} \Bigr) \Bigr)
\odot \ol{\widehat{m}}^{j,T}
\right\Arrowvert_p\,.
\end{multline*}
We denote by $B_{p,\max}$ a bound on the
maximal $\ell_p$--norm of an element in the bounded set $\cK$. A triangular
equality shows that
\begin{align*}
\left\Arrowvert
\sum_{j=1}^{N_\delta} \widehat{\lambda}_{j,T}
\Bigl( \Psi\bigl( m^{[j]} \bigr)
-
\Psi\Bigl( \ol{\widehat{m}}^{j,T} \Bigr) \Bigr)
\odot \ol{\widehat{m}}^{j,T}
\right\Arrowvert_p
& \leq \sum_{j=1}^{N_\delta}
\widehat{\lambda}_{j,T} \sum_{a \in \cA}
\biggl| \Psi\bigl( m^{[j]} \bigr)_a
-
\Psi\Bigl( \ol{\widehat{m}}^{j,T} \Bigr)_a \biggr|
\, \Bigl\Arrowvert \ol{\widehat{m}}^{j,T}_a \Bigr\Arrowvert_p \\
& \leq
B_{p,\max} \sum_{j=1}^{N_\delta} \widehat{\lambda}_{j,T} \sum_{a \in \cA}
\biggl| \Psi\bigl( m^{[j]} \bigr)_a
-
\Psi\Bigl( \ol{\widehat{m}}^{j,T} \Bigr)_a \biggr|
\end{align*}
where $\psi(m)_a$ refers to the probability mass put on $a \in \cA$ by $\psi(m)$.
As indicated above, we assume for this sketch of proof that $\Psi$ is a Lipschitz function,
with Lipschitz constant $L$ with respect to the $\ell_1$--norm over $\Delta(\cA)$
and the $\ell_p$--norm over $\cK$. We get
\begin{align*}
B_{p,\max} \sum_{j=1}^{N_\delta} \widehat{\lambda}_{j,T} \sum_{a \in \cA}
\biggl| \Psi\bigl( m^{[j]} \bigr)_a - \Psi\Bigl( \ol{\widehat{m}}^{j,T} \Bigr)_a \biggr|
& \leq B_{p,\max} \, L \sum_{j=1}^{N_\delta} \widehat{\lambda}_{j,T} \Bigl\Arrowvert
m^{[j]} - \ol{\widehat{m}}^{j,T} \Bigr\Arrowvert_p \\
& = B_{p,\max} \, L \sum_{j=1}^{N_\delta} \left\Arrowvert \frac{1}{T} \sum_{t=1}^T
\ind_{ \bigl\{ \widehat{m}_t = m^{[j]} \bigr\} } \bigl( \widehat{m}_t - m_t \bigr) \right\Arrowvert_p\,.
\end{align*}
Substituting~\eqref{eq:calibrscore}, we proved~\eqref{eq:calibrgoal} for
$L' = B_{p,\max} \, L$, which concludes the proof.
\end{proof}

\subsection{Some thoughts on the optimality of target functions}

The previous subsections showed that target functions of the form
$\varphi^\Psi$ were achievable, unlike the best-in-hindsight target
function $\varphi^\star$, and that they were more ambitious than the
concavification $\cav[\varphi^\star]$. The question
of their optimality can be raised --- a question to which we will not be able to answer
in general. Our thoughts are gathered in Appendix~\ref{sec:admiss}.

\section{A strategy by regret minimization in blocks}
\label{sec:regretmin}

In this section we exhibit a strategy to achieve the stronger notion of convergence~\eqref{eq:cv}
with the target functions $\varphi^\Psi$ advocated in Section~\ref{sec:truegoal},
irrespectively of the continuity or lack of continuity of $\varphi^\Psi$.
The algorithm is efficient, as long as calls to $\Psi$ are (a full discussion
of the complexity issues will be provided for each application studied in Section~\ref{sec:appli}).

\subsection{Description and analysis of the strategy}

As in~\citet{BNR}, the considered strategy --- see Figure~\ref{fig:strat-blocks} --- relies
on some auxiliary regret-minimizing strategy $\cR$, namely, a strategy
with the following property.

\begin{assumption}
\label{ass:R}
The strategy $\cR$ sequentially outputs mixed actions $u_t \in \Delta(\cA)$
such that for all ranges $B > 0$ (not necessarily known
in advance), for all $T \geq 1$ (not necessarily known in advance),
for all sequences of vectors $m'_t \in \R^{\cA}$ of one-dimensional payoffs
lying in the bounded interval $[-B,B]$,
possibly chosen online by some opponent player, where $t = 1,\ldots,T$,
\[
\max_{u \in \Delta(\cA)} \sum_{t=1}^T u \odot m'_t
\leq 4B\sqrt{T \ln A} + \sum_{t=1}^T u_t \odot m'_t\,.
\]
\end{assumption}

Note in particular that the auxiliary strategy $\cR$ automatically adapts to the range $B$ of the payoffs and to the number of rounds $T$,
and has a sublinear worst-case guarantee. (The adaptation to $B$ will be needed because $\cK$ is unknown.) Such auxiliary strategies indeed exist,
for instance, the polynomially weighted average forecaster of~\citet{CeLu03}.
Other ones with a possibly larger constant factor in front of the $B\sqrt{T \ln A}$ term also exist,
for instance, exponentially weighted average strategies with learning rates carefully tuned over time,
as described by~\citet{CeMaSt07} or~\citet{Rooijetal14}.

For the sake of elegance (but maybe at the cost of not providing all the intuitions that led us to this result),
we only provide in Figure~\ref{fig:strat-blocks}
the time-adaptive version of our strategy, which does not need to know the time horizon $T$ in advance.
The used blocks are of increasing lengths $1,\,2,\,3,\ldots$.
Simpler versions with fixed block length $L$ would require a tuning of $L$
in terms of $T$ (pick $L$ of the order of $\sqrt{T}$) to optimize
the theoretical bound.

\begin{figure}[t!h!]
\center
\begin{minipage}{0.96\textwidth}
\rule{\linewidth}{.5pt}{}
{\small
\emph{Parameters}: a regret-minimizing strategy $\cR$ (with initial action $u_1$) and
a response function $\Psi : \cK \to \Delta(\cA)$ \smallskip \\
\emph{Initialization}: play $x_1 = u_1$ and observe $m_1 \in (\R^{d})^{\cA}$;
this is block $n=1$ \smallskip \\
\emph{For} blocks $n = 2,3\ldots$,
\begin{enumerate}
\item compute the total discrepancy $\delta_n$ at the beginning\protect\footnote{
Block $n$ starts at round $\displaystyle{1+\frac{n(n-1)}{2}}$, is of length $n$,
thus lasts till round $\displaystyle{\frac{n(n+1)}{2}}$. \smallskip
}
of block $n$ (that is, till the end of block $n-1$),
\begin{align*}
& \delta_{n} = \sum_{t=1}^{n(n-1)/2} x_t \odot m_t - \sum_{k=1}^{n-1} k \, \Psi\bigl( \ol{m}^{(k)} \bigr) \odot \ol{m}^{(k)} \in \R^d\,, \\
\mbox{where} \qquad & \ol{m}^{(k)} = \frac{1}{k} \sum_{t = 1}^{k} m_{k(k-1)/2+t}
\end{align*}
is the average vector of vector payoffs obtained in block $k \in \{1,\ldots,n-1\}$;
\item run a fresh instance $\cR_n$ of $\cR$ for $n$ rounds as follows: set $u_{n,1} = u_1$;
then, for $t = 1,\ldots,n$,
\begin{enumerate}
\item play $x_{n(n-1)/2+t} = u_{n,t}$ and observe $m_{n(n-1)/2+t} \in (\R^{d})^{\cA}$;
\item feed $\cR_n$ with the vector payoff $m'_{n,t} \in \R^{\cA}$
with components given, for $a \in \cA$, by
\[
m'_{n,t,a} = - \langle \delta_{n}, \, m_{n(n-1)/2+t,a} \rangle \in \R,
\]
where $\langle\,\cdot\,,\,\cdot\,\rangle$ denotes the inner product in $\R^d$;
\item obtain from $\cR_n$ a mixed action $u_{n,t+1}$.
\end{enumerate}
\end{enumerate}
}
\end{minipage}
\rule{\linewidth}{.5pt}
\caption{\label{fig:strat-blocks} The proposed strategy, which plays in blocks of increasing lengths $1,\,2,\,3,\,\ldots$}
\end{figure}

\begin{theorem}
\label{th:main}
For all response functions $\Psi$,
the strategy of Figure~\ref{fig:strat-blocks} is such that
for all $T \geq 1$, for all strategies of the opponent,
there exists $c_T \in \cC_{\varphi^\Psi(\ol{m}_T)}$ ensuring
\begin{equation}
\label{eq:whattobound}
\bigl\Arrowvert \ol{r}_T - c_T \bigr\Arrowvert_2
\leq 10 \, T^{-1/4} \ln A + 3 B_{2,\max} T^{-1/2} \,,
\end{equation}
where $B_{2,\max} = \displaystyle{\max_{m \in \cK} \Arrowvert m \Arrowvert_2}$ is the maximal Euclidean norm of
elements in $\cK$.

In particular, denoting by $\kappa_p$ a constant such that $\Arrowvert \,\cdot\, \Arrowvert_p
\leq \kappa_p \, \Arrowvert \,\cdot\, \Arrowvert_2$,
for all $T \geq 1$ and all strategies of the opponent,
\begin{equation}
\label{eq:whattobound2}
\dist_p \bigl( \ol{r}_T, \, \cC_{\varphi^\Psi(\ol{m}_T)} \bigr)
\leq \kappa_p \bigl( 10 \, T^{-1/4} \ln A + 3 B_{2,\max} T^{-1/2} \bigr) \,.
\end{equation}
\end{theorem}

\begin{remark}
With the notation of Figure~\ref{fig:strat-blocks},
denoting in addition by $n_T$ the largest integer such that $n_T(n_T+1)/2 \leq T$,
by
\[
\ol{m}^{\parti} = \frac{1}{T - n_T(n_T+1)/2} \sum_{t = n_T(n_T+1)/2+1}^{T} m_t
\]
the partial average of the vectors of vector payoffs $m_t$ obtained during the last and $(n_T+1)$--th block
when $n_T(n_T+1)/2 < T$ (and an arbitrary element of $\cK$ otherwise), we can take
\begin{equation}
\label{eq:whattobound3}
c_T = \frac{1}{T} \left(
\sum_{k=1}^{n_T} k \, \Psi\bigl( \ol{m}^{(k)} \bigr) \odot \ol{m}^{(k)}
+ \left(T - \frac{n_T(n_T+1)}{2} \right)
\Psi\bigl(\ol{m}^{\parti}\bigr) \odot \ol{m}^{\parti}
\right).
\end{equation}
\end{remark}

\paragraph{Important comments on the result.}
The strategy itself does not rely on the knowledge of $\cK$,  as promised in Remark~\ref{rk:K};
only its performance bound does, via the $B_{2,\max}$ term.
Also, the convexity of $\cC$ is not required. The convergence rates are independent of the
ambient dimension $d$.

Concerning the norms, even if the strategy and its bound~\eqref{eq:whattobound} are based
on the Euclidean norm, the set $\cC_{\varphi^\Psi(\ol{m}_T)}$ is defined
in terms of the $\ell_p$--norm as in~\eqref{eq:phiPsi}. The constant $\kappa_p$ exists by
equivalence of the norms on a finite-dimensional space.

Finally, we note that we obtained the uniformity requirement stated after~\eqref{eq:aim}
in the deterministic form with a function $\zeta$ where $\zeta(T) = O\big(T^{-1/4}\big)$.
\medskip

\begin{proof}
The convergence~\eqref{eq:whattobound2} follows from the bound~\eqref{eq:whattobound}
via the equivalence between $\ell_p$-- and $\ell_2$--norms.
That the stated $c_T$ in~\eqref{eq:whattobound3} belongs to $\cC_{\varphi^\Psi(\ol{m}_T)}$,
where the latter set is defined in terms of the $\ell_p$--norm as in~\eqref{eq:phiPsi},
is by construction of $\varphi^\Psi$ as a supremum.
It thus suffices to prove~\eqref{eq:whattobound} with the $c_T$
defined in~\eqref{eq:whattobound3}, which we do by induction.

The induction is on the index $n \geq 1$ of the blocks, and the quantities to
control are the squared Euclidean norms of the discrepancies at the end
of these blocks, $\Arrowvert \delta_{n+1} \Arrowvert_2^2$. (We recall
that $\delta_{n+1}$ denotes the discrepancy at the end of block $n$.)
We have that $\delta_{2}$ is a difference between two elements of $\cK$,
thus that $\Arrowvert \delta_{2} \Arrowvert_2^2 \leq 4 B_{2,\max}^2$.

We use a self-confident approach: we consider a function $\beta : \{ 1,2,\ldots\} \to [0,+\infty)$ to be defined
by the analysis and assume that we have proved that our strategy is such that
for some $n \geq 1$ and for all sequences of vectors
of vector payoffs $m_t \in \cK$, possibly chosen by some opponent (i.e., for all
strategies of the opponent),
\[
\Arrowvert \delta_{n+1} \Arrowvert_2^2 =
\left\Arrowvert
\sum_{t=1}^{n(n+1)/2} x_t \odot m_t - \sum_{k=1}^{n} k \, \Psi\bigl( \ol{m}^{(k)} \bigr) \odot \ol{m}^{(k)}
\right\Arrowvert_2^2 \leq \beta(n)\,.
\]
For instance, we define $\beta(1) = 4 B_{2,\max}^2$.

We then study what we can guarantee for $n+2$. We have
\begin{eqnarray}
\nonumber
\Arrowvert \delta_{n+2} \Arrowvert_2^2 & = & \left\Arrowvert \delta_{n+1} + \left( \sum_{t=n(n+1)/2+1}^{(n+1)(n+2)/2} x_t \odot m_t -
(n+1) \, \Psi\bigl( \ol{m}^{(n+1)} \bigr) \odot \ol{m}^{(n+1)} \right) \right\Arrowvert_2^2 \\
\nonumber
& = & \Arrowvert \delta_{n+1} \Arrowvert_2^2 +
2 \left\langle \delta_{n+1}, \,\, \sum_{t=n(n+1)/2+1}^{(n+1)(n+2)/2} x_t \odot m_t -
(n+1) \, \Psi\bigl( \ol{m}^{(n+1)} \bigr) \odot \ol{m}^{(n+1)} \right\rangle \\
\label{eq:innprodthmain}
& & \quad + \left\Arrowvert \sum_{t=n(n+1)/2+1}^{(n+1)(n+2)/2} x_t \odot m_t -
(n+1) \, \Psi\bigl( \ol{m}^{(n+1)} \bigr) \odot \ol{m}^{(n+1)} \right\Arrowvert_2^2 \,.
\end{eqnarray}
We upper bound the two squared norms by $\beta(n)$ and $4 (n+1)^2 B_{2,\max}^2$, respectively.
Using the short-hand notation $u^{(n+1)} = \Psi\bigl( \ol{m}^{(n+1)} \bigr)$,
the inner product can be
rewritten, with the notation of Figure~\ref{fig:strat-blocks}, as
\begin{multline}
\label{eq:innerprod}
\left\langle \delta_{n+1}, \,\, \sum_{t=n(n+1)/2+1}^{(n+1)(n+2)/2} x_t \odot m_t -
(n+1) \, \Psi\bigl( \ol{m}^{(n+1)} \bigr) \odot \ol{m}^{(n+1)} \right\rangle \\
= - \sum_{t=1}^{n+1} u_{n+1,t} \odot m'_{n+1,t} + \sum_{t=1}^{n+1} u^{(n+1)} \odot m'_{n+1,t}
\end{multline}
Now, the Cauchy--Schwarz inequality indicates that for all $a$ and $t$,
\[
\bigl| m'_{n+1,t,a} \bigr| \leq \Arrowvert \delta_{n+1} \Arrowvert_2 \, \Arrowvert m_{n(n+1)/2+t,a} \Arrowvert_2
\leq B_{2,\max} \, \sqrt{\beta(n)}\,,
\]
where we used again the induction hypothesis.
Assumption~\ref{ass:R} therefore indicates that
the quantity~\eqref{eq:innerprod} can be bounded by $4 B_{2,\max} \sqrt{\beta(n)} \, \sqrt{(n+1) \ln A}$.

Putting everything together, we have proved that the induction holds provided that $\beta(n+1)$ is defined,
for instance, as
\[
\beta(n+1) = \beta(n) + 8 B_{2,\max} \sqrt{\beta(n)} \, \sqrt{(n+1) \ln A} + 4 B_{2,\max}^2 (n+1)^2\,.
\]
By the lemma in Appendix~\ref{sec:lemmaproofmainthm}
(taking $\gamma_1 = 4 B_{2,\max} \sqrt{\ln A}$ and $\gamma_2 = 4 B_{2,\max}^2$),
we thus get first
\[
\forall \, n \geq 1, \quad \beta(n) \leq 32 \, B_{2,\max}^2 (\ln A) \, n^3\,,
\qquad \mbox{then} \qquad
\forall \, n \geq 1, \quad \Arrowvert \delta_{n+1} \Arrowvert_2 \leq B_{2,\max} \sqrt{32 \, n^3 \ln A}\,.
\]

It only remains to relate the quantity at hand in~\eqref{eq:whattobound} and~\eqref{eq:whattobound3}
to the $\delta_{n+1}$.
By separating time till the end of the $n_T$--block and starting from the beginning of block $n_T+1$
(should the latter start strictly before $T$), we get
\begin{align*}
\ol{r}_T - c_T &
= \frac{1}{T} \sum_{t=1}^T x_t \odot m_t
-
\frac{1}{T} \left( \sum_{k=1}^{n_T} k \, \Psi\bigl( \ol{m}^{(k)} \bigr) \odot \ol{m}^{(k)}
+ \left(T - \frac{n_T(n_T+1)}{2} \right) \Psi\bigl(\ol{m}^{\parti}\bigr) \odot \ol{m}^{\parti}
\right) \\
& = \frac{1}{T} \delta_{n_T +1} + \frac{1}{T} \sum_{t = n_T(n_T+1)/2+1}^{T}
\Bigl( x_t - \Psi\bigl(\ol{m}^{\parti}\bigr) \Bigr) \odot m_t\,.
\end{align*}
The second sum contains at most $n_T$ elements, as the $(n_T+1)$-th regime is incomplete.
A triangular inequality thus shows that
\begin{align*}
\Arrowvert \ol{r}_T - c_T \Arrowvert_2 \leq \frac{1}{T} \, \Arrowvert \delta_{n_T +1} \Arrowvert_2
+ 2 B_{2,\max} \frac{n_T}{T} & \leq \frac{1}{T} \, B_{2,\max} \sqrt{32 \, n_T^3 \ln A}
+ 2 B_{2,\max} \frac{n_T}{T} \\
& \leq 10 \, T^{-1/4} \ln A + 3 B_{2,\max} T^{-1/2}\,,
\end{align*}
where we used the inequality $n_T(n_T+1)/2 \leq T$, its implication
$n_T \leq \sqrt{2T}$, as well as (for the sake of readability)
the bounds $\sqrt{32 \times 2^{3/2}} < 10$
and $2\sqrt{2} < 3$.
\end{proof}

\subsection{Discussion}
\label{sec:disc}

In this section we gather comments, remarks, and pointers to the literature.
We discuss in particular the links and improvements over the concurrent (and independent) works
by~\citet{BernShim} and~\citet{AFFT}.

\subsubsection{Do we have to play in blocks? Is the obtained $T^{-1/4}$ rate optimal?}

Our strategy proceeds in blocks, unlike the ones exhibited for the case of known games,
as the original strategy by~\citet{Bla56} or the more recent one by~\citet{BernShim},
see Section~\ref{sec:setting2}.
The calibration-based strategy considered in the proof of Lemma~\ref{lm:phixstarnotopti}
also performed some grouping, according to the finitely many possible values of the predicted vectors
of vector payoffs.
This is because the target set to approach is unknown: the decision-maker approaches a sequence of
expansions of this set, where the sizes $\varphi^\Psi(\ol{m}_T)$ of the expansions vary depending
on the sequence of realized averages $\ol{m}_T$ of vectors of vector payoffs.
When an approachable target set $\cC$ is given, the strategies, e.g.,
by~\citet{Bla56} or~\citet{BernShim}, do not need to perform any grouping.

Actually, it is easy to prove that the following quantity, which involves
no grouping in rounds, cannot be minimized in general:
\begin{align}
\label{eq:impossible}
& \Arrowvert \ol{r}_T - c'_T \Arrowvert_p =
\left\| \frac{1}{T}\sum_{t=1}^T x_t\odot m_t- \frac{1}{T}\sum_{t=1}^T \Psi(m_t)\odot m_t\right\|_p \\
\mbox{where} \qquad & c'_T = \frac{1}{T}\sum_{t=1}^T \Psi(m_t)\odot m_t\,.
\end{align}
Indeed, consider a toy case where the $m_t = (g_{t,a})_{a \in \cA}$ have scalar components $g_{t,a} \in \R$,
the negative orthant $\cC = (-\infty,0]$ is to be approached, whose expansions
are given by $\cC_{\alpha} = (-\infty,\alpha]$, for $\alpha \geq 0$.
Considering the response function $\Psi\bigl( (g_a)_{a\in\cA} \bigr) \in \argmax_{a \in \cA} g_a$,
we see that~\eqref{eq:impossible} boils down to controlling
\[
\left| \frac{1}{T} \sum_{t=1}^T \sum_{a \in \cA} x_{t,a} g_{t,a} -
\frac{1}{T} \sum_{t=1}^T \max_{a'_t \in \cA} g_{a'_t,t} \right|,
\]
which is impossible\footnote{\label{fn:LB} This can be seen, e.g., by taking $\cA = \{1,2\}$
and binary payoffs $g_{t,a} \in \{0,1\}$. The expectation of the per-round regret
is larger than a positive constant
when the $g_{t,a}$ are realizations of independent random variables $G_{t,a}$
identically distributed according to a symmetric Bernoulli distribution.
In particular, the regret is larger than this constant
for some sequence of binary payoffs $g_{t,a} \in \{0,1\}$.}.
This is in contrast with the regret~\eqref{eq:regretunknowndef}, which can be minimized.
The most severe issue here is not really the absolute value taken,
but the fact that we are comparing the decision-maker's payoff to the sum of the
instantaneous maxima of the payoffs $g_{a,t}$, instead of being interesting in the
maximum of their sums as in~\eqref{eq:regretunknowndef}.

So, the answer to the first question would be: yes, we have to play in blocks.
Given that, is the obtained $T^{-1/4}$ rate optimal? We can answer this question
in the positive by considering the same toy case as above. With this example,
the bound~\eqref{eq:whattobound} given the definition~\eqref{eq:whattobound3} of $c_T$
rewrites
\[
\left| \frac{1}{T} \sum_{t=1}^T \sum_{a \in \cA} x_{t,a} g_{t,a} -
\frac{1}{T} \left( \sum_{k=1}^{n_T} \max_{a'_k \in \cA} \sum_{t=k(k-1)/2+1}^{k(k+1)/2} g_{t,a'_k}
+ \max_{a'_{n_T+1} \in \cA} \sum_{t=n_T(n_T-1)/2+1}^{T} g_{t,a'_k}
\right) \right|,
\]
which corresponds to the control (from above and from below) of what is called
a (per-round) ``tracking regret'' for $n_T$ shifts. This notion was introduced by~\citet{HeWa98};
see also \citet[Chapter~5]{CeLu06} for a review of the results known for tracking regret.
In particular, the examples used therein to show the optimality of the bounds
(which are of the form of the one considered in Footnote~\ref{fn:LB}) can be adapted in our
context, so that the lower bound on tracking regret with $n_T$ shifts applies in our case: it is
of the order of $\sqrt{n_T/T}$, thus of $T^{-1/4}$.

In a nutshell, what we proved in these paragraphs is that if we are to ensure the convergence~\eqref{eq:aim}
by controlling a quantity of the form~\eqref{eq:whattobound} and~\eqref{eq:whattobound3},
then we have to proceed in blocks and convergence cannot hold at a faster rate than $T^{-1/4}$.
However, the associated strategy is computationally efficient.
Also, neither the convexity of $\cC$, nor the continuity of $\varphi^\Psi$ or of $\Psi$
are required, yet the stronger convergence~\eqref{eq:cv} is achieved, not only~\eqref{eq:cvgal}.

\subsubsection{Trading efficiency for a better rate; an interpretation of the different rates}

Theorem~\ref{th:main} shows that some set is approachable here, namely, the set $\cG_{\varphi^\Psi}$
defined in~\eqref{eq:cvgal}:
it is thus a B--set in the terminology of~\citet{Spi02}, see also~\citet{Hou71} as well as
a remark by~\citet{Bla56}. Therefore,
there exists some (abstract and possibly computationally extremely inefficient)
strategy which approaches it at a $1/\sqrt{T}$--rate.
Indeed, the proof of existence of such a strategy does not rely on any constructive argument.

Based on all remarks above, we may provide an intuitive interpretation
of the $T^{-1/4}$ rate obtained in Theorem~\ref{th:main}, versus the $1/\sqrt{T}$
rate achieved either in our context by the abstract strategy mentioned right above,
or associated with Blackwell's original strategy or variations of it
as the one by~\citet{BernShim}
in the classical case of known games and sets $\cC$ being known to be approachable.
The interpretation is in terms of the
number of significant (costly)
computational units $N_{\mbox{\tiny comp}}$ (projections, solutions of convex or linear programs, etc.)
to be performed. The strategies with the faster rate $1/\sqrt{T}$ perform at least one or two of these units at each round,
while our strategy does it only of the order of $\sqrt{T}$ times during $T$ rounds---they are encompassed
into the calls to $\Psi$ and take place at times $t = k(k-1)/2+1$ for $k \geq 1$.
In all these cases, the rate is proportional to $\sqrt{N_{\mbox{\tiny comp}}/T}$.

\subsubsection{On the related framework of~\citet{AFFT}}

The setting considered therein is exactly the one described in Section~\ref{sec:setting};
our works are concurrent and independent.
Crucial differences lie however in the aims pursued and in the nature of the results obtained.

The quality of a strategy is evaluated by~\citet{AFFT} based on some quasi-concave
and Lipschitz function $f : \R^d \to \R$.
With the notation of Theorem~\ref{th:main} (the straightforward extension to an unknown horizon $T$ of) their aim
is to guarantee that
\begin{equation}
\label{EQ:AFFT}
\liminf_{T \to \infty} \left\{ f\!\left(\frac{1}{T}\sum_{t=1}^T x_t\odot m_t \right)
- \min_{k\in\{1,\ldots,n_T-1\}} \max_{x \in \Delta(\cA)} f\bigl(x \odot \ol{m}^{(k)}\bigr) \right\} \geq 0\,,
\end{equation}
where we recall that $n_T$ is of order $\sqrt{T}$.
\citet{AFFT} mention that this convergence can take place at an optimal $T^{-1/4}$ rate.

Satisfying~\eqref{EQ:AFFT} and recovering this optimal rate is actually a direct consequence of our Theorem~\ref{th:main} and of the assumptions on $f$.
Indeed, \eqref{eq:whattobound} and~\eqref{eq:whattobound3} together with
the Lipschitz assumption on $f$ entail that
\begin{equation}
\label{EQ:AFFT2}
\liminf_{T \to \infty} \left\{
f\!\left(\frac{1}{T}\sum_{t=1}^T x_t\odot m_t\right) - f\!\left( O(1/\sqrt{T}) +
\frac{1}{T} \sum_{k=1}^{n_T-1} k\,\Psi(\ol{m}^{(k)}) \odot \ol{m}^{(k)}\right) \right\}
\geq 0\,.
\end{equation}
The quasi-concavity of $f$ implies that the image by $f$ of a convex combination is larger than the minimum of the images by $f$
of the convex combinations. Thus, \eqref{EQ:AFFT2} yields in particular
\[
\liminf_{T \to \infty} \left\{
f\!\left(\frac{1}{T}\sum_{t=1}^Tx_t\odot m_t\right) - \min_{k=1,\ldots,n_T-1}
\, f\!\bigl(\Psi(\ol{m}^{(k)}) \odot \ol{m}^{(k)}\bigr) \right\}
\geq 0\,.
\]
The convergence rate is the same as for~\eqref{EQ:AFFT2}, thus is of order at least $T^{-1/4}$.
Defining the response function $\Psi$ by
$\Psi(m) \in \displaystyle{\argmax_{x \in \Delta(\cA)} f(x\odot m)}$, we get~\eqref{EQ:AFFT}.

However, we need to underline that the aim~\eqref{EQ:AFFT} is extremely weak:
assume, for instance, that during some block Nature chooses $\ol{m}^{(k)}$ with identical
components such that
\[
\forall\, x \in \Delta(\cA), \qquad
f \bigl( x\odot \ol{m}^{(k)} \bigr) = \min f\,.
\]
Then \eqref{EQ:AFFT} is satisfied irrespectively of the algorithm.
On the contrary, the more demanding aim~\eqref{EQ:AFFT2} that we consider is not necessarily satisfied
and an appropriate algorithm---as our one---must be used.

In addition, the strategy designed by~\citet{AFFT} still requires some knowledge---the set $\cK$
of vectors of vector payoffs needs to be known (which is a severe restriction)---and uses projections onto convex sets.
The rate they obtain for their weaker aim is $O(T^{-1/4})$, as we get for our improved aim.

\subsubsection{Links with the strategy of~\citet{BernShim}}
\label{sec:BS-gener}

In this final paragraph of our discussion of Theorem~\ref{th:main}
we review the strategy of~\citet{BernShim} and extend it, as much as it can
be extended, to a setting as close as possible to our setting of unknown games:
see Figure~\ref{fig:BSstrat}.
The extension however requires that the set $\cK$
of possible vectors of vector payoff is known to the decision-maker --- an assumption that we would not be ready
to make.

\begin{figure}[t!h!]
\center
\begin{minipage}{0.8\textwidth}
\rule{\linewidth}{.5pt}{}
{\small
\emph{Parameters}: the set $\cK$, a response function $\Psi : \cK \to \Delta(\cA)$ \smallskip \\
\emph{Initialization}: play an arbitrary $x_1 \in \Delta(\cA)$,
pick an arbitrary $\tm_1 \in \cK$ \smallskip \\
\emph{For} rounds $t = 2,3,\ldots$, \\
\begin{enumerate}
\item Update the discrepancy
\qquad $\displaystyle{
\Delta_{t-1} = \sum_{\tau=1}^{t-1} x_\tau \odot m_\tau -
\sum_{\tau=1}^{t-1} \Psi\bigl(\tm_s\bigr)\odot \tm_s\,;
}$
\item Play a mixed action
\qquad $\displaystyle{
x_t \in \argmin_{x \in \Delta(\cA)} \max_{m \in \cK} \,\, \langle \Delta_{t-1}, \, x \odot m \rangle\,;
}$
\item Compute
\qquad $\displaystyle{
\tm_t \in \argmax_{m \in \cK} \min_{x \in \Delta(\cA)} \langle \Delta_{t-1}, \, x \odot m \rangle\,.
}$
\end{enumerate}
}
\rule{\linewidth}{.5pt}
\end{minipage}

\caption{\label{fig:BSstrat} A generalization of the strategy of~\citet{BernShim}.}
\end{figure}

\begin{theorem}
\label{th:BS}
For all response functions $\Psi$,
the strategy of Figure~\ref{fig:BSstrat} is such that
for all $T \geq 1$, for all sequences $m_1,\,\ldots,\,m_T \in (\R^{d})^{\cA}$
of vectors of vector payoffs, possibly chosen by an opponent player,
\begin{equation}
\label{eq:whattobound4}
\left\Arrowvert \frac{1}{T} \sum_{t=1}^{T} x_t \odot m_t -
\frac{1}{T} \sum_{t=1}^{T} \Psi\bigl(\tm_t\bigr)\odot \tm_t \right\Arrowvert_2
\leq \frac{2 B_{2,\max}}{\sqrt{T}} \,.
\end{equation}
\end{theorem}

The obtained bound is deterministic and uniform over
all strategies of the opponent, just as the bound of Theorem~\ref{th:main} was.
Of course, the control~\eqref{eq:whattobound4} is a much weaker statement than
trying to force the convergence of the quantity~\eqref{eq:impossible} towards~$0$:
to which set can we guarantee that
\[
\sum_{t=1}^{T} \Psi\bigl(\tm_t\bigr)\odot \tm_t
\]
belongs? It seems difficult to relate this quantity to the set $\cC_{\varphi^\Psi(\ol{m}_T)}$ and get
the convergence $\dist_p\bigl(\ol{r}_T,\cC_{\varphi^\Psi(\ol{m}_T)}\bigr) \to 0$
except in some special cases. The applications of Section~\ref{sec:appli}
will further underline this limitation.

One of these special cases is when the set $\cC$ is approachable,
i.e., that the null target function $\varphi \equiv 0$ is achievable.
This assumption of approachability translates in our more general case into the existence of
a response function $\Psi_{\cC}$ such that $\Psi_{\cC}(m) \odot m \in \cC$ for all $m \in \cK$.
As advocated by~\citet{BernShim}, in such settings it is often computationally feasible
to access to $\Psi_{\cC}(m)$ and less costly than performing projections onto $\cC$.

In a nutshell, the strategy of~\citet{BernShim} can be extended to the setting of ``almost unknown''
games (the set $\cK$ needs to be known), but the obtained convergence guarantees
are meaningful only under an assumption of approachability of the target set $\cC$.
One of the two sources of unknownness of our setting is then (almost) dealt with:
the fact that the underlying structure of the game is unknown, but not the fact that the
target is unknown as well. \medskip

\proofof{of Theorem~\ref{th:BS}}
The construction of the strategy at hand and the proof of its performance bound
also follow some self-confident approach, as for Theorem~\ref{th:main};
however, no blocks are needed. We proceed as in~\eqref{eq:innprodthmain}
by developing the square Euclidian norm of $\Delta_{t+1}$
to relate it to the one of $\Delta_{t}$, where $t \geq 1$:
\begin{align*}
\Arrowvert \Delta_{t+1} \Arrowvert_2^2
= \Arrowvert \Delta_{t} \Arrowvert_2^2
& + 2 \underbrace{\left\langle \Delta_{t+1}, \,\, x_{t+1} \odot m_{t+1} -
\Psi\bigl(\tm_{t+1}\bigr)\odot \tm_{t+1} \right\rangle}_{\leq 0} \\
& +
\underbrace{\left\Arrowvert x_{t+1} \odot m_{t+1} - \Psi\bigl(\tm_{t+1}\bigr)\odot \tm_{t+1}
\right\Arrowvert_2^2}_{\leq 4 B_{2,\max}^2}\,.
\end{align*}
We show below that the inner product is non-positive, which after an immediate recurrence shows that
$\Arrowvert \Delta_{t+1} \Arrowvert_2^2 \leq 4 B_{2,\max}^2 (t+1)^2$ and concludes the proof.

Indeed, by von Neumann's minmax theorem, using the definitions of $x_{t+1}$ and $\tm_{t+1}$,
\begin{align*}
\max_{m \in \cK} \langle \delta_{t+1}, \, x_{t+1} \odot m \rangle
\min_{x \in \Delta(\cA)} \max_{m \in \cK} \langle \Delta_{t+1}, \, x \odot m \rangle
& = \max_{m \in \cK} \min_{x \in \Delta(\cA)} \langle \Delta_{t+1}, \, x \odot m \rangle \\
& = \min_{x \in \Delta(\cA)} \bigl\langle \delta_{t+1}, \, x \odot \tm_{t+1} \bigr\rangle\,.
\end{align*}
In particular, for all $m' \in \cK$ and $x' \Delta(\cA)$,
\[
\langle \delta_{t+1}, \, x_{t+1} \odot m' \rangle \leq
\max_{m \in \cK} \langle \delta_{t+1}, \, x_{t+1} \odot m \rangle
= \min_{x \in \Delta(\cA)} \bigl\langle \delta_{t+1}, \, x \odot \tm_{t+1} \bigr\rangle
\leq \bigl\langle \delta_{t+1}, \, x' \odot \tm_{t+1} \bigr\rangle\,.
\]
Choosing $m' = m_{t+1}$ and $x' = \Psi\bigl(\tm_{t+1}\bigr)$ entails
\[
\left\langle \Delta_{t+1}, \,\, x_{t+1} \odot m_{t+1} -
\Psi\bigl(\tm_{t+1}\bigr)\odot \tm_{t+1} \right\rangle \leq 0
\]
as used above to complete the induction.
\end{proof}

\subsubsection{Link with classical approachability, opportunistic approachability}
\label{sec:opportunistic}

We recall that in the setting of known finite games described in Section~\ref{sec:setting2},
vectors of vector payoffs $m$ actually correspond to the $r(\,\cdot\,,b)$.
This defines the closed convex set $\cK$ as the set of the $r(\,\cdot\,,y)$
for all mixed actions $y \in \Delta(\cB)$ of the opponent.
Both strategies considered therein relied on a response function $\Psi = x^\star$
defined as
\[
\forall \, y \in \Delta(\cB), \qquad
\Psi\big( r(\,\cdot\,,y) \big) = x^\star\big( r(\,\cdot\,,y) \big)
\in \argmin_{x \in \Delta(\cA)}  \, \dist_p\bigl(r(x,y),\,\cC\bigr)\,.
\]
Accessing to a value of this response function amounts to solving the convex program
\[
\min \left\| {\sum_{a \in \cA} \, x_a r(a,y)} - c\right\|^2\ \qquad
\mbox{s.t.} \quad x \in \Delta(\cA), \ \ c \in \cC\,,
\]
which can be done efficiently. (It even reduces to a quadratic problem when $\cC$ is a polytope.)

Our algorithm based on this response function approaches the set
$\cC_{\alpha_{\unif}}$, where the quantity $\alpha_{\unif}$ is defined in~\eqref{eq:alphaunif};
it is not required to compute the said quantity $\alpha_{\unif}$.
The same guarantee with the same remark apply to the two strategies
presented in Section~\ref{sec:setting2}: Blackwell's strategy for the case
$p=2$ only, and the strategy by~\citet{BernShim} for all $0 < p \leq +\infty$.
These three algorithms ensure in particular that the average payoffs $\ol{r}_T$ are
asymptotically inside of or on the border of the set $\cC_{\alpha_{\unif}}$.

Now, that $\alpha_{\unif}$ is null or positive indicates whether a convex set $\cC$ is approachable or not.
But the problem of determining the approachability of a set is actually an extremely difficult problem
as even the determination of the approachability of the singleton set $\cC = \{0\}$ in known games
is NP--hard to perform; see~\citet{Blkcomput}.
To see that there is no contradiction between being able to approach $\cC_{\alpha_{\unif}}$
and not being able to say that $\alpha_{\unif} > 0$ or not,
note that none of the algorithms discussed above does, neither in advance nor in retrospect, issue any statement
on the value of $\alpha_{\unif}$.
They happen to perform approachability to
$\cC_{\alpha_{\unif}}$ for the specific sequence of actions chosen by the opponent
but do not determine a minimal approachable set which would be suited for all sequences
of actions. In particular, they do not provide a certificate of whether a given convex set $\cC$ is approachable
or not.

\paragraph{Opportunistic approachability.}
In general, in known games, one has that the target function
considered above, $\varphi^{x^\star}$, satisfies $\varphi^{x^\star} \prec \alpha_{\unif}$.
That is, easy-to-control sequences of vectors $r(\,\cdot\,,b_t)$ can lead to an average
payoff $\ol{r}_T$ being much closer
to $\cC$ than the uniform distance $\alpha_{\unif}$: we get some pathwise refinement of classical
approachability. This should be put in correspondence with the recent, but different, notion of
opportunistic approachability (see~\citealp{Opport}).
However, quantifying exactly what we gain here with the pathwise refinement
would require much additional work (maybe a complete paper as the one mentioned above)
and this is why we do not explore further this issue.

\section{Applications}
\label{sec:appli}

In this section we work out two applications: learning while being evaluated with global cost functions,
and approachability under sample path constraints.

\subsection{Global cost functions}

This problem was introduced by~\citet{GlobalCosts} and slightly generalized by~\citet{BernShim}.
We first extend it to our setting of unknown games and describe what Theorem~\ref{th:main} guarantees in our case,
and then compare our approach and results to the ones of the two mentioned references.
We keep the original terminology of global costs (thus to be minimized) and do not switch to global gains
(to be maximized), but such a substitution would be straightforward.

\paragraph{Description of the problem in the case of unknown games.}
We denote by $\cKp \subset \R^d$ the closed convex and bounded set formed by the $m_a$ when $m \in \cK$ and $a \in \cA$.
A global cost function is a mapping $C : \cKp \to \R$ measuring the quality
of any vector in $\cKp$. For instance, the choice of a mixed action $x \in \Delta(\cA)$ given a vector of vector
payoffs $m \in \cK$ is evaluated by $C(x \odot m)$; or the performance of the average payoff $\ol{r}_T$ is
equal to $C(\ol{r}_T)$. Some regret is to be controlled to ensure that the latter quantity is small as well.
\citet{GlobalCosts} and~\citet{BernShim} defined this regret as
\begin{align}
\label{eq:globalcosts}
& C(\ol{r}_T) - \inf_{x \in \Delta(\cA)} C(x \odot \ol{m}_T)
= C(\ol{r}_T) - C^\star(\ol{m}_T) \\
\nonumber
\mbox{where} \quad \forall m \in \cK, \qquad & C^\star(m) = \inf_{x \in \Delta(\cA)} C(x \odot m)\,.
\end{align}
Assuming that $C$ is continuous, the infimum in the defining equation of $C^\star$ is achieved and we can
thus construct a response function $\Psi^\star : \cK \to \Delta(\cA)$ such that
\begin{equation}
\label{eq:defPsistarGC}
\forall m \in \cK, \qquad C \bigl( \Psi^\star(m) \odot m \bigr) = \min_{x \in \Delta(\cA)} C(x \odot m) = C^\star(m)\,.
\end{equation}
Actually, the proof techniques developed in the latter references (see the discussion below) only ensure a vanishing regret for
the convexification $\vex[C]$ of $C$ and the concavification $\cav[C^\star]$ of $C^\star$, i.e., they can only
issue statements of the form
\begin{equation}
\label{eq:vexcavregret}
\limsup_{T \to \infty} \, \bigl\{ \vex[C](\ol{r}_T) - \cav[C^\star](\ol{m}_T) \bigl\} \, \leq 0\,;
\end{equation}
they additionally get convergence rates when $\vex[C]$ is a Lipschitz function.

We recall that $\vex[C] \leq C$ and that $\cav[C^\star] \geq C^\star$,
so that the statements of the form above are much weaker than the original aim~\eqref{eq:globalcosts},
at least when $C$ is not convex or $C^\star$ is not concave.
A natural case when the latter assumptions are however satisfied is when $C = C_p$ is the $\ell_p$--norm, for $p > 1$
(including the supremum norm $p=+\infty$):
\[
C_p\bigl( (u_1,\ldots,u_d) \bigr) = \left( \sum_{j=1}^d u_j^p \right)^{\!\! 1/p}
\qquad \mbox{and} \qquad
C_{+\infty}\bigl( (u_1,\ldots,u_d) \bigr) = \max_{j=1,\ldots,d} |u_j|\,.
\]

\paragraph{Our main contribution: a better notion of regret.}
We will directly bound $C(\ol{r}_T)$, whether $C$ is convex or not,
and will similarly relax the assumption of concavity of $C^\star$ needed in all mentioned
references to tackle the desired regret~\eqref{eq:globalcosts}.

To that end, we propose a notion of regret that is better in
all cases (whether $C$ and $C^\star$ are respectively convex and concave, or not).
More precisely, we
compare $C(\ol{r}_T)$ to a quantity $\phi^\Psi(\ol{m}_T)$ based on any response function $\Psi$ and which generalizes the
definition~\eqref{eq:phiPsi}: for all $m \in \cK$,
\[
\phi^\Psi(m) = \sup \left\{ C\!\left(\sum_{i \leq N} \lambda_i \, \Psi(m_i) \odot m_i \right) : \ \
N \geq 1 \ \ \mbox{and}
\ \ \sum_{i \leq N} \lambda_i m_i = m \right\}.
\]
The extended notion of regret is then defined as $C(\ol{r}_T) - \phi^\Psi(\ol{m}_T)$.

We now explain why this new definition is always more ambitious than what could
be guaranteed so far by the literature, namely~\eqref{eq:vexcavregret}. Indeed,
when $C$ is convex and by definition of $\Psi^\star$, we have in particular
\[
\phi^{\Psi^\star}(m) \leq \sup \left\{ \sum_{i \leq N} \lambda_i \,
C \bigl( \Psi^\star(m_i) \odot m_i \bigr) : \ \
N \geq 1 \ \ \mbox{and}
\ \ \sum_{i \leq N} \lambda_i m_i = m \right\} = \cav[C^\star](m)\,.
\]
The inequality stated above can be strict.
For instance, as indicated in Section~\ref{sec:moreambitious}, when $C = \dist_p(\,\cdot\,,\,\cC\big)$ where $\cC$ is convex,
the global cost function $C$ is indeed convex. We then have
\[
\phi^{\Psi^\star} = \varphi^{x^\star}
\qquad \mbox{and} \qquad
\cav[C^\star] = \cav[\varphi^\star]\,,
\]
and thus we possibly have $\phi^{\Psi^\star} \prec \cav[C^\star]$, as stated in Lemma~\ref{lm:xstar}.
The function $C = \dist_p(\,\cdot\,,\,\cC\big)$ is also a Lipschitz function, which illustrates the interest
of the second part of the following corollary. We recall that $B_{2,\max}$
denotes the maximal Euclidean norm of elements in $\cK$.

\begin{corollary}
\label{cor:GC}
For all response functions $\Psi$,
when $C$ is continuous and convex, the strategy of Figure~\ref{fig:strat-blocks} ensures that,
uniformly over all strategies of the opponent,
\begin{equation}
\label{eq:GCregretgen}
\limsup_{T \to \infty} \, \Bigl\{ C(\ol{r}_T) - \phi^\Psi(\ol{m}_T) \Bigl\} \, \leq 0\,.
\end{equation}
When $C$ is in addition a Lipschitz function, with constant $L$ for the $\ell_2$--norm on $\cKp$,
we more precisely have
\[
C(\ol{r}_T) - \phi^\Psi(\ol{m}_T) \leq 10 L \, T^{-1/4} \ln A + 3 B_{2,\max} L \, T^{-1/2} \,.
\]
\end{corollary}

\begin{proof}
We apply Theorem~\ref{th:main} and use its notation.
The function $C$ is continuous thus uniformly continuous on the compact set $\cKp$.
Thus,
\[
\bigl\Arrowvert \ol{r}_T - c_T \bigr\Arrowvert_2 \longrightarrow 0
\qquad \mbox{entails} \qquad
C(\ol{r}_T) - C(c_T) \longrightarrow 0\,,
\]
both convergences toward~$0$ being uniform over all strategies of the opponent.
Now, by definition of $c_T$ as a convex combination of elements of the form $\Psi(m_i) \odot m_i$,
we have $C(c_T) \leq \phi^\Psi(\ol{m}_T)$, which concludes the first part of the corollary.

The second part is proved in the same manner, simply by taking into account the bound~\eqref{eq:whattobound}
and the fact that $C$ is a Lipschitz function.
\end{proof}

\paragraph{Discussion.}
\label{par:discussion}
As indicated in general in Section~\ref{sec:setting} we offered two extensions
to the setting of global costs: first, we explained how to deal with unknown games
and second, indicated what to aim for, given that the natural target is not
necessarily approachable and that sharper targets as the ones traditionally considered
can be reached. The second contribution is perhaps the most important one.

Indeed, the natural target~\eqref{eq:globalcosts} corresponds to ensuring the following convergence to a set:
\begin{equation}
\label{eq:goalGCappr}
(\ol{r}_T,\ol{m}_T) \, \longrightarrow \, \cH \qquad \mbox{where} \qquad \cH = \bigl\{ (r,m) : \ \ C(r) \leq C^\star(m) \bigr\}\,.
\end{equation}
This target set $\cH$ is not necessarily a closed, convex, and approachable set
but its convex hull $\conv[\cH]$ is so, as proved by~\citet{GlobalCosts} and~\citet{BernShim}.
This convex hull is exactly equal to
\[
\conv[\cH] = \bigl\{ (r,m) : \ \ \vex[C](r) \leq \cav[C^\star](m) \bigr\}\,.
\]
We replace the convergence of $(\ol{r}_T,\ol{m}_T)$ to the above convex hull $\conv[\cH]$ by a convergence to, e.g.,
the smaller set
\[
\Bigl\{ (r,m) : \ \ C(r) \leq \phi^{\Psi^\star}(m) \bigr\}\,.
\]
Such a convergence is ensured by~\eqref{eq:GCregretgen} and the continuity of~$C$,
and this set is smaller than $\conv[\cH]$ as follows from the discussion before Corollary~\ref{cor:GC}.

\citet{GlobalCosts} use directly Blackwell's approachability strategy to approach $\conv[\cH]$,
which requires the computation of projections onto $\conv[\cH]$, a possibly computationally delicate task.
We thus only focus on how \citet{BernShim} proceed and will explain why the obtained guarantee of convergence
to $\conv[\cH]$ cannot be easily improved with their strategy. We apply Theorem~\ref{th:BS}
to a lifted space of payoffs $\underline{\cK} \subset \R^d \times (\R^d)^{\cA}$.
Namely, with each $m \in \cK$, we associate
$\underline{m} \in \underline{\cK}$ defined as
\begin{equation}
\label{eq:lifting}
\forall \, a \in \cA, \qquad \underline{m}_a = \veccol{m_a}{m} \in \R^d \times (\R^d)^{\cA}\,.
\end{equation}
That is, the component $a \in \cA$ of $\underline{m}$ contains the corresponding component $m_a$ of $m$
as well as the vector $m$ itself. In particular,
\[
\frac{1}{T} \sum_{t=1}^{T} x_t \odot \underline{m}_t = \veccol{\ol{r}_T}{\ol{m}_T}\,.
\]
We pick the response function $\underline{\Psi}^\star : \underline{\cK} \to \Delta(\cA)$
corresponding to the base response function $\Psi^\star$ defined in~\eqref{eq:defPsistarGC}: $\underline{\Psi}^\star(\underline{m}) = \Psi^\star(m)$.
Then, the convergence~\eqref{eq:whattobound4} reads
\begin{equation}
\label{eq:lifting2}
\left\Arrowvert \veccol{\ol{r}_T}{\ol{m}_T} -
\frac{1}{T} \sum_{t=1}^{T} \underline{\Psi}^\star\bigl(\underline{\tm}_t\bigr)\odot \underline{\tm}_t \right\Arrowvert_2
=
\left\Arrowvert \frac{1}{T} \sum_{t=1}^{T} x_t \odot \underline{m}_t -
\frac{1}{T} \sum_{t=1}^{T} \underline{\Psi}^\star\bigl(\underline{\tm}_t\bigr)\odot \underline{\tm}_t \right\Arrowvert_2
\, \longrightarrow \, 0\,,
\end{equation}
for some $\underline{\tm}_1, \, \underline{\tm}_2, \, \ldots$ in $\underline{\cK}$.
By definition of $\underline{\cK}$ and $\underline{\Psi}^\star$, for all $t \geq 1$,
\[
\underline{\Psi}^\star\bigl(\underline{\tm}_t\bigr)\odot \underline{\tm}_t
= \veccol{\Psi^\star\bigl(\tm_t\bigr)\odot \tm_t}{\tm_t} \,\,
\in \bigl\{ (r,m) : C(r) \leq C^\star(m) \bigr\} \subseteq \cH \,.
\]
Thus, the convex combination of the $\underline{\Psi}^\star\bigl(\underline{\tm}_t\bigr)\odot \underline{\tm}_t$
belongs to $\conv[\cH]$ and the convergence~\eqref{eq:goalGCappr} is achieved. Under
additional regularity assumptions (e.g., continuity of $\vex[C]$ and $\cav[C^\star]$), the
stronger convergence~\eqref{eq:vexcavregret} holds as can be seen by adapting the arguments used in
the second part of Section~\ref{sec:B1}.

However, the limitations of the approach of~\citet{BernShim} are twofold.
First, as already underline in Section~\ref{sec:BS-gener}, the sets $\underline{\cK}$
or equivalently $\cK$ need to be known to the strategy; thus the game is not fully unknown.
Second, there is no control on where the $\underline{\tm}_t$ or $\tm_t$ lie, and therefore, there
is no reasonable hope to refine the convergence~\eqref{eq:goalGCappr} to a convergence to a set smaller
than $\conv[\cH]$ and defined in terms of $\ol{m}_T$ as in our approach.

\subsection{Approachability under sample path constraints}

We generalize here the setting of regret minimization in known finite
games under sample path constraints, as introduced by~\citet{Pathwise}
and further studied by~\citet{BernShim}. The straightforward
enough generalization is twofold: we deal with approachability
rather than just with regret; we consider unknown games.

\paragraph{Description of the problem in the case of unknown games.}
A vector in $\cKp \subset \R^d$ now not only represents some payoff but also some cost.
The aim of the player here is to control the average payoff vector (to have it converge
to the smallest expansion of a given closed convex target set~$\cP$) while abiding by
some cost constraints (ensuring that the average cost vector converges
to a prescribed closed convex set $\Gamma$).

Formally, two matrices $G$ and $C$, of respective sizes $g \times d$ and $\gamma \times d$,
associate with a vector $m_a \in \cKp \subset \R^d$ a payoff vector $G m_a \in \R^{g}$ and
a cost vector $C m_a \in \R^{\gamma}$. For instance, when the decision-maker chooses a mixed
action $x \in \Delta(\cA)$ and the vector of vector payoffs is $m \in \cK$,
she gets an instantaneous payoff $G(x \odot m)$ and suffers an instantaneous cost $C(x \odot m)$.
The admissible costs are represented by a closed convex set $\Gamma \subseteq \R^\gamma$,
while some closed convex payoff set $\cP \subseteq \R^g$ is to be approached.

The question is in particular what the decision-maker should aim for:
the target is unknown. Following the general aim~\eqref{eq:aim}
and generalizing the aims of~\citet{Pathwise} and~\citet{BernShim},
we assume that she wants the following convergences to take place, uniformly
over all strategies of the opponent: as $T \to \infty$,
\begin{equation}
\label{eq:aimSPC}
\dist_p \bigl( G \ol{r}_T, \, \cP_{\varphi(\ol{m}_T)} \bigr) \longrightarrow 0 \qquad \mbox{and}
\qquad \dist_p \bigl( C \ol{r}_T, \, \Gamma \bigr) \longrightarrow 0\,,
\end{equation}
for some target function $\varphi$ to be defined (being as small as possible).
That is, she wants to control her average payoff $G \ol{r}_T$ as well as she
can while ensuring that asymptotically, her average cost $C \ol{r}_T$ lies
in the set $\Gamma$ of admissible costs.

To make the problem meaningful and
as in the original references, we assume that the cost constraint is feasible.

\begin{assumption}
For all $m \in K$, there exists $x \in \Delta(\cA)$ such that $G(x \odot m) \in \Gamma$.
\end{assumption}

\paragraph{What the general result of Theorem~\ref{th:main} states.}
We consider mostly the following response function $x_\Gamma^\star$: for all $m \in K$,
\[
x_\Gamma^\star(m) \in \argmin \Bigl\{ \dist_p\bigl( G(x \odot m),\,\cP\bigr) :
\  x \in \Delta(\cA) \ \ \mbox{s.t.} \ \ G(x \odot m) \in \Gamma \Bigr\}\,,
\]
which provides the instantaneous-best and cost-abiding response.
The defining minimum is indeed achieved by continuity as
both $\cP$ and $\Gamma$ are closed sets.
Since in addition $\cP$ and $\Gamma$ are convex, the defining equation of $x_\Gamma^\star$
is a convex optimization problem under a convex constraint and can be solved efficiently.

Of course, more general (preferably also cost-abiding) response functions can be considered.
By a cost-abiding response function $\Psi$, we mean any response function such that
\[
\forall \, m \in \cK, \qquad C \bigl( \Psi(m) \odot m \bigr) \in \Gamma\,.
\]
This property is indeed satisfied by $x_\Gamma^\star$.

We adapt the definition~\eqref{eq:phiPsi} of the target function
based on some response function~$\Psi$ to only consider payoffs:
for all $m \in \cK$,
\[
\phi^\Psi(m) = \sup \left\{ \dist_p\!\left(G \sum_{i \leq N} \lambda_i \, \Psi(m_i) \odot m_i,\,\,\cP\right) : \ \
N \geq 1 \ \ \mbox{and} \ \ \sum_{i \leq N} \lambda_i m_i = m \right\}.
\]
A discussion below will explain why such goals, e.g., \eqref{eq:aimSPC}
with $\varphi = \phi^{x_\Gamma^\star}$, are more ambitious than the aims targeted
in the original references, which essentially consisted of shooting for
\eqref{eq:aimSPC} with $\varphi = \cav[\phi^\star]$ only and in restricted cases (uni-dimensional ones,
$g=1$), where for all $m \in \cK$,
\begin{equation}
\label{eq:phistarSPC}
\phi^\star(m) = \dist_p \Bigl( G \bigl( x_\Gamma^\star(m) \odot m \bigr),\,\cP \Bigr)\,.
\end{equation}

\begin{corollary}
\label{cor:SPC}
For all cost-abiding response functions $\Psi$,
the strategy of Figure~\ref{fig:strat-blocks} ensures that
for all $T \geq 1$ and for all strategies of the opponent,
\begin{align*}
\dist_p \bigl( G \ol{r}_T, \, \cP_{\phi^\Psi(\ol{m}_T)} \bigr)
\leq & \, L_G \bigl( 10 \, T^{-1/4} \ln A + 3 B_{2,\max} T^{-1/2} \bigr) \\
\phantom{\sum^{t}} \mbox{and} \qquad \qquad \qquad
\dist_p \bigl( C \ol{r}_T, \, \Gamma \bigr) \leq & \,
L_C \bigl( 10 \, T^{-1/4} \ln A + 3 B_{2,\max} T^{-1/2} \bigr)\,,
\end{align*}
where $L_G$, respectively, $L_C$,
is a norm on $G$, respectively, $C$, seen as a linear function from $\R^d$ equipped
with the $\ell_2$--norm to $\R^g$, respectively, $\R^\gamma$, equipped with the $\ell_p$--norm.

In particular, the aim~\eqref{eq:aimSPC} is achieved.
\end{corollary}

\begin{proof}
We apply Theorem~\ref{th:main} and use its notation.
By~\eqref{eq:whattobound} and by definition of $L_C$,
\[
\bigl\Arrowvert C \ol{r}_T - C c_T \bigr\Arrowvert_p \leq
L_C \bigl( 10 \, T^{-1/4} \ln A + 3 B_{2,\max} T^{-1/2} \bigr)\,.
\]
Because $\Psi$ was assumed to be cost-abiding and in view of the form~\eqref{eq:whattobound3}
of $c_T$, we have $C c_T \in \Gamma$ and we thus have proved
\[
\dist_p \bigl( C \ol{r}_T, \, \Gamma \bigr) \leq
L_C \bigl( 10 \, T^{-1/4} \ln A + 3 B_{2,\max} T^{-1/2} \bigr)\,.
\]
A similar argument, based on the fact that $G c_T \in \cP_{\phi^\Psi(\ol{m}_T)}$
by definition of~$\phi^\Psi$, yields the stated bound for
$\dist_p \bigl( G \ol{r}_T, \, \cP_{\phi^\Psi(\ol{m}_T)} \bigr)$.
\end{proof}

\paragraph{What the extension of earlier results, e.g., Theorem~\ref{th:BS}, yields.}
As indicated several times already, \citet{Pathwise} and~\citet{BernShim}
only considered the case of regret minimization, i.e., a special case of
approachability when $G$ is a linear form ($g = 1$) and $\cP$ is an interval of the form
$[G_\infty,+\infty)$ where $G_\infty$ is a bound on the values taken by~$G$. We will
discuss this special case below.

The strategies considered by~\citet{Pathwise} were not efficient (they relied on being able to
project on complicated sets or resorted to calibrated auxiliary strategies), unlike
the one studied by~\citet{BernShim}. We will thus focus on the latter.
The (not necessarily convex) target set considered therein is
\[
\cH = \bigl\{ (r,m) : \ \ Cr \in \Gamma \ \ \mbox{and} \ \ \dist_p(Gr,\cP) \leq \phi^\star(m) \bigr\}\,,
\]
where $\phi^\star$ was defined in~\eqref{eq:phistarSPC}.
Because $\cP$ is convex and $G$ is linear, the function $r \in \R^d \mapsto \dist_p(Gr,\cP)$
is convex; see, e.g., \citet[Example~3.16]{BoydV}.
The convex hull of $\cH$ thus equals
\[
\conv[\cH] = \bigl\{ (r,m) : \ \ Cr \in \Gamma \ \ \mbox{and} \ \ \dist_p(Gr,\cP) \leq \cav[\phi^\star](m) \bigr\}\,.
\]
To be able to compare the merits of the strategy by~\citet{BernShim}
to Corollary~\ref{cor:SPC}, we first extend
it to the case of unknown games, based on Theorem~\ref{th:BS}.
To that end we consider the same lifting as in~\eqref{eq:lifting} and apply
similarly Theorem~\ref{th:BS} to get~\eqref{eq:lifting2} as well,
for the cost-abiding response function $x^\star_\Gamma$.
Using that in this case, by definition of~$x^\star_\Gamma$,
\[
\underline{\Psi}^\star\bigl(\underline{\tm}_t\bigr)\odot \underline{\tm}_t
= \veccol{x^\star_\Gamma\bigl(\tm_t\bigr)\odot \tm_t}{\tm_t} \in \cH\,,
\]
the convergence~\eqref{eq:lifting2} rewrites
\[
\Biggl\Arrowvert \veccol{\ol{r}_T}{\ol{m}_T} -
\underbrace{\frac{1}{T} \sum_{t=1}^{T} \veccol{x^\star_\Gamma\bigl(\tm_t\bigr)\odot \tm_t}{\tm_t}}_{\in \, \conv[\cH]} \Biggr\Arrowvert_2
\, \longrightarrow \, 0
\]
and entails the convergence of $(\ol{r}_T,\ol{m}_T)$ to $\conv[\cH]$. In particular,
$C \ol{r}_T \to \Gamma$.
Under an additional regularity assumption, e.g., the continuity of $\cav[\phi^\star]$, we also get
(by adapting the arguments used in the second part of Section~\ref{sec:B1}) the stronger convergence
\begin{multline*}
\limsup_{T \to \infty} \, \Bigl\{
\dist_p(G \ol{r}_T,\cP) - \cav[\phi^\star](\ol{m}_T)
\Bigr\} \, \leq 0\,, \\
\mbox{that is,} \qquad
\limsup_{T \to \infty} \, \dist_p\bigl(G \ol{r}_T, \, \cP_{\cav[\phi^\star](\ol{m}_T)} \bigr) \, \leq 0\,.
\end{multline*}
Summarizing, the convergence~\eqref{eq:aimSPC} is guaranteed with $\varphi = \cav[\phi^\star]$;
an inspection of the arguments above shows that $\cav[\phi^\star]$ being actually uniformly
continuous, the desired uniformity over the strategies of the opponent is achieved.

The same limitations to this approach as mentioned at the end of the previous section arise
as far as the concepts of unknown game and unknown target are concerned.
First, the set $\cK$ needs to be known to the strategy and the game is not fully unknown.
Second, there is no control on where the $\tm_t$ lie, and therefore, there
is no reasonable hope to refine the convergence~\eqref{eq:aimSPC} with $\varphi = \cav[\phi^\star]$
into a convergence with a smaller target function $\varphi$.
In contrast, Corollary~\ref{cor:SPC} provided such a refinement with $\varphi = \phi^{x^\star_\Gamma}$,
which by convexity of $r \in \R^d \mapsto \dist_p(Gr,\cP)$ is smaller
and possibly strictly smaller than $\cav[\phi^\star]$
(adapt Lemma~\ref{lm:xstar} to prove the strict inequality).

\paragraph{A note on known games.}
However,
\citet[Section~5]{Pathwise} exhibit a class of cases when $\cav[\phi^\star]$
is the optimal target function: in known games, with scalar payoffs and scalar constraints,
and with set of constraints of the form $\Gamma = (-\infty,c_0]$. This amounts to minimizing
some constrained regret.

We thus briefly indicate what known games are in this context, as defined by~\citet{Pathwise}
and~\citet{BernShim}.
Some linear scalar payoff function $u : \Delta(\cA) \times \Delta(\cB) \to [0,G_\infty]$
and some linear vector-valued cost function $v : \Delta(\cA) \times \Delta(\cB) \to \R^\gamma$ are
given. (With no loss of generality we can assume that the
payoff function takes values in a bounded nonnegative interval.) The set $\cK$ of our general formulation
corresponds to the vectors, as $y$ describes $\Delta(\cB)$,
\[
\veccol{u(\,\cdot\,,y)}{v(\,\cdot\,,y)} \in \R \times \R^\gamma\,.
\]
The matrices $G$ and $C$ extract respectively the first component and all but the first component.
Regret is considered, that is, the payoff set $\cP$ to be be approached given the constraints is
$[G_\infty,\,+\infty)$. The expansions are $\cP_\alpha = [G_\infty-\alpha,\,+\infty)$.
The distance of some $r \in \R$ to some $\cP_\alpha$ equals $(G_\infty-\alpha - r)_+$.

In this context, convergences of the form~\eqref{eq:aimSPC} thus read
\begin{equation}
\label{eq:aimSPC-unidim}
\frac{1}{T} \sum_{t=1}^T v(x_t,b_t) \longrightarrow \Gamma \qquad \mbox{and} \qquad
\liminf_{T \to \infty} \, \biggl\{
\frac{1}{T} \sum_{t=1}^T u(x_t,b_t) -
\Bigl( G_\infty - \varphi \bigl( (\ol{u}_T,\ol{v}_T) \bigr) \Bigr)
\biggr\} \geq 0\,,
\end{equation}
where
\[
\veccol{\ol{u}_T}{\ol{v}_T} = \frac{1}{T} \sum_{t=1}^T \veccol{u(\,\cdot\,,b_t)}{v(\,\cdot\,,b_t)}\,,
\]
and thus correspond to some constrained regret-minimization problems.
Indeed, denoting
\[
\ol{y}_T = \frac{1}{T} \sum_{t=1}^T \delta_{b_t}
\]
the empirical frequency of actions $b_t \in \cB$ taken by the opponent,
and recalling that $u$ is bounded by $G_\infty$,
we have, for instance, when $\varphi = \phi^\star$,
\begin{multline}
\label{eq:aimSPC-unidim3}
\phi^\star \bigl( (\ol{u}_T,\ol{v}_T) \bigr) = G_\infty - u^\star_\Gamma(\ol{y}_T) \\
\mbox{where} \qquad
u^\star_\Gamma(y) = \max \bigl\{ u(x,y) : \  x \in \Delta(\cA) \ \ \mbox{s.t.} \ \ v(x,y) \in \Gamma \bigr\}\,.
\end{multline}
The convergence~\eqref{eq:aimSPC-unidim} finally reads when $\varphi = \phi^\star$:
\begin{equation}
\label{eq:aimSPC-unidim2}
\frac{1}{T} \sum_{t=1}^T v(x_t,b_t) \longrightarrow \Gamma \qquad \mbox{and} \qquad
\liminf_{T \to \infty} \, \biggl\{
\frac{1}{T} \sum_{t=1}^T u(x_t,b_t) -
u^\star_\Gamma(\ol{y}_T)
\biggr\} \geq 0\,.
\end{equation}
Just as we showed (in Section~\ref{sec:varphistar}) that in general the target function $\phi^\star$ is
not achievable, \citet[Section~3]{Pathwise} showed that the constrained regret with respect
to $u^\star_\Gamma(\ol{y}_T)$ defined in~\eqref{eq:aimSPC-unidim2} cannot be minimized.

The proposed relaxation was to consider its convexification $\vex[u^\star_\Gamma]$
instead in~\eqref{eq:aimSPC-unidim2}, which corresponds to $\cav[\phi^\star]$
in~\eqref{eq:aimSPC-unidim3}.
In this specific one-dimensional setting, the target function $\phi^{x^\star_\Gamma}$
equals $\cav[\phi^\star]$: our general theory provides no improvement. This is in line
with the optimality result for $\cav[\phi^\star]$ exhibited by \citet[Section~5]{Pathwise}
in this case.

\subsection{Approachability of an approachable set at a minimal cost}

This is the dual problem of the previous problem: have the vector-valued payoffs
approach an approachable convex set while suffering some costs and trying to control
the overall cost. In this case, the set $\cP$ is fixed and the $\alpha$--expansions
are in terms of the set of constraints $\Gamma$.
Actually, this is a problem symmetric to the previous one,
when the roles of $G$ and $\cP$ are exchanged with $C$ and $\Gamma$.

\acks{
Vianney Perchet acknowledges funding from the ANR, under grants ANR-10-BLAN-0112 and ANR-13-JS01-0004-01. Shie Mannor was partially supported by the ISF under contract 890015.
Gilles Stoltz would like to thank Investissements d'Avenir (ANR-11-IDEX-0003 / Labex Ecodec / ANR-11-LABX-0047)
for financial support. \medskip

An extended abstract of this article appeared in the Proceedings of the 27th Annual Conference on
Learning Theory (COLT'2014), \emph{JMLR Workshop and Conference Proceedings}, Volume 35, pages 339--355,
2014.}


\newpage
\bibliographystyle{plainnat}
\bibliography{Mannor-Perchet-Stoltz-MOL-Bib}

\newpage
\appendix

\section{Link with regret minimization in unknown games}
\label{sec:app:regretunknown}

The problem of regret minimization can be encompassed as an instance of approachability.
We recall here why the knowledge of the payoff
structure is not crucial for this very specific problem. (This, of course, is not the case at all
for general approachability problems.) \medskip

Indeed, with the notation of Section~\ref{sec:setting2}, the aim of regret minimization, in a known finite game
with payoff function $s : \cA \times \cB$, is for the decision-maker to ensure that
\[
\limsup_{T \to \infty} \,\, \frac{1}{T} \sum_{t=1}^T s(a_t,b_t)
- \max_{a \in \cA} \frac{1}{T} \sum_{t=1}^T s(a,b_t) \geq 0 \qquad \mbox{a.s.}
\]
This can be guaranteed by approaching $(\R_-)^{\cA}$ with the vector payoff function
$r : \cA \times \cB \to \R^{\cA}$ defined by
\begin{equation}
\label{eq:mtunknowngame}
r(a,b) = \bigl( s(a',b) - s(a,b) \bigr)_{a' \in \cA}\,.
\end{equation}
The necessary and sufficient condition for approachability of the closed convex
set $(\R_-)^{\cA}$ is satisfied for $r$.
The condition~\eqref{eq:howtoappr} rewrites in our case
\[
\forall \, y \in \Delta(\cB), \qquad
\bigl\langle
\widetilde{r}_t - \widetilde{c}_t, \,\, r(x_{t+1},y) - \widetilde{c}_t \bigr\rangle =
\Bigl\langle
\bigl( \widetilde{r}_t \bigr)_+, \,\,
r(x_{t+1},y) + \bigl( \widetilde{r}_t \bigr)_- \Bigr\rangle
=
\Bigl\langle \bigl( \widetilde{r}_t \bigr)_+, \,\, r(x_{t+1},y) \Bigr\rangle
\leq 0\,,
\]
where $(\,\cdot\,)_+$ and $(\,\cdot\,)_+$ denote respectively the vectors formed by
taking the nonnegative and non-positive parts of the original components of the
vector of interest. Now, using the specific form of $r$, we see that
\[
\Bigl\langle \bigl( \widetilde{r}_t \bigr)_+, \,\, r(x_{t+1},y) \Bigr\rangle
= \sum_{a' \in \cA} \bigl( \widetilde{r}_{t,a'} \bigr)_+ \, s(a',y)
- \left( \sum_{a' \in \cA} \bigl( \widetilde{r}_{t,a'} \bigr)_+ \right) s(x_{t+1},y)
\,.
\]
Either all components of $\widetilde{r}_{t}$ are non-positive, i.e., $\widetilde{r}_{t}$
is already in $(\R_-)^{\cA}$, or we can choose the mixed distribution $x_{t+1}$
defined by
\begin{equation}
\label{eq:mixeddistrregretunknown}
\forall \, a \in \cA, \qquad x_{t+1,a} = \frac{\bigl( \widetilde{r}_{t,a} \bigr)_+}{
\sum_{a' \in \cA} \bigl( \widetilde{r}_{t,a'} \bigr)_+}\,.
\end{equation}
In the latter case, we then get
\[
\forall \, y \in \Delta(\cB), \qquad \bigl\langle
\widetilde{r}_t - \widetilde{c}_t, \,\, r(x_{t+1},y) - \widetilde{c}_t \bigr\rangle
= \Bigl\langle \bigl( \widetilde{r}_t \bigr)_+, \,\, r(x_{t+1},y) \Bigr\rangle = 0\,,
\]
and~\eqref{eq:howtoappr} is in particular satisfied.

\paragraph{The knowledge of $s$ (or $r$) is not crucial here.}
Comments have to be made on the specific choice of $x_{t+1}$:
it is independent of the payoff structure $s$ (or $r$), it only depends on the past payoff vectors
$s(\,\cdot\,,b_\tau)$, where $\tau \leq t$.

In particular, the strategy above to minimize the regret can be generalized in a straightforward way
to the case of games with full monitoring but whose payoff structure is unknown.
In these games, at each round, the opponent chooses a payoff vector
\[
g_t = (g_{t,a})_{a \in \cA}\,,
\]
the decision-maker chooses an action $a_t \in \cA$ and observes the entire vector $g_t$,
while wanting to ensure that the regret vanishes,
\begin{equation}
\label{eq:regretunknowndef}
\limsup_{T \to \infty} \,\, \frac{1}{T} \sum_{t=1}^T g_{t,a_t}
- \max_{a \in \cA} \frac{1}{T} \sum_{t=1}^T g_{t,a} \geq 0 \qquad \mbox{a.s.}
\end{equation}
It suffices to replace all occurrences of $s(\,\cdot\,,b_t)$ above by $g_t$. In particular,
the payoff function $r$ defined in~\eqref{eq:mtunknowngame} is to be replaced by the
vectors of vector payoffs $m_t \in \R^{\cA \times \cA}$ whose components equal
\[
\forall \, a \in \cA, \qquad
m_{t,a} = \bigl( g_{t,a'} - g_{t,a} \bigr)_{a' \in \cA}\,.
\]

\paragraph{A note on the bandit monitoring: the case of unknown games.}
In the case of an unknown game (i.e., when the payoff structure is unknown and when only bandit monitoring
is available), the generic trick presented around~\eqref{eq:estmt}
should be adapted, as indicated by Footnote~\ref{fn:1}. Indeed, the only feedback
available at the end of each round is $g_{t,a_t}$ and not $m_{t,a_t}$.
The estimation to be performed is rather on the vectors $g_t$ than on the $m_t$:
for all $a \in \cA$,
\[
\widehat{g}_{t,a} = \frac{g_{t,a}}{x_{t,a}} \ind_{\{ a = a_t \}}\,,
\]
with the same constraints $x_{t,a} \geq \gamma_t$ for all $t \geq 1$,
from which we define
\[
\widehat{m}_{t,a} = \bigl( \widehat{g}_{t,a'} - \widehat{g}_{t,a} \bigr)_{a' \in \cA}\,.
\]
Substituting the estimates $\widehat{g}_{t}$
in the strategy defined around~\eqref{eq:mixeddistrregretunknown}
in lieu of the vectors $s(\,\cdot\,,b_t)$
ensures that the regret vanishes.

\newpage
\section{Calculations associated with Examples~1 and~2}

\subsection{Proof of Lemma~\ref{lm:notach}}
\label{sec:Expost-notach}

\proofof{for Example~1}
Assume by contradiction that the convergence~\eqref{eq:aim} can be achieved
and consider any strategy of the decision maker to do so, which we denote by $\sigma$.
It suffices to consider the almost sure convergence~\eqref{eq:aim}, the stronger
uniformity requirements stated after it will not be invoked. All statements in the sequel
hold almost surely, and quantities like $T_\varepsilon$ and
$T'_\varepsilon$ should be thought of as random variables.

Imagine in a first time that opponent chooses, at every stage $t \geq 1$, the vectors $m_t = m^\dag$.
We have $\varphi^\star(m^\dag) = 4$, the smallest of the supremum norms of $m^\dag_1$ and $m^\dag_2$.
The aim is then that the average payoffs $\ol{r}_{T}$ converge to $\cC_4$. But this can be guaranteed
only if the averages of the chosen mixed actions $x_t$ converge to $(1,0)$. That is,
given $\varepsilon > 0$, there exists some, possibly large, integer $T_\varepsilon$ such that
\begin{equation}
\label{eq:olrT}
\left\| \ol{r}_{T_\varepsilon} - \veccol{3}{4} \right\|_\infty \leq \varepsilon\,.
\end{equation}

Now, consider a second scenario. During the first $T_\varepsilon$ stages,
the opponent chooses the vectors $m_t = m^\dag$. By construction, as the strategy $\sigma$ is fixed,
\eqref{eq:olrT} is ensured. Now, in the next $T_\varepsilon$ stages,
for $T_\varepsilon +1 \leq t \leq 2 T_\varepsilon$,
assume that the opponent chooses the vectors $m_t = m^\sharp$,
and denote by $\gamma_\epsilon \in [0,1]$ the average of the first components $x_{t,1} \in [0,1]$
of the mixed actions $x_t$ selected by $\sigma$ in this second set of stages.
We have $\ol{m}_{2 T_\varepsilon} = m^{(1/2)}$, where
\[
m^{(1/2)} = \frac{1}{2} m^\dag + \frac{1}{2} m^\sharp
\qquad \mbox{has components} \qquad
m^{(1/2)}_1 = \left[\begin{array}{c} 3.5 \\ 3.5 \end{array}\right]
\quad \mbox{and} \quad
m^{(1/2)}_2 = \left[\begin{array}{c} 2.5 \\ 2.5 \end{array}\right].
\]
Therefore the target set is $\cC_{\varphi^\star(\ol{m}_{2 T_\varepsilon})}
= \cC_{2.5}$. However, by definition of $\gamma_\varepsilon$,
we have
\[
\ol{r}_{2 T_\varepsilon} = \frac{1}{2} \ol{r}_{T_\varepsilon} + \frac{1}{2} \sum_{t=T_\varepsilon+1}^{2 T_\varepsilon} x_t \odot m^\sharp
 = \frac{1}{2} \ol{r}_{T_\varepsilon} +
\frac{1}{2} \left( \gamma_\varepsilon \veccol{4}{3} + (1-\gamma_\varepsilon) \veccol{5}{0} \right)
 = \frac{1}{2} \ol{r}_{T_\varepsilon} +
\frac{1}{2} \veccol{5-\gamma_\varepsilon}{3\gamma_\varepsilon} \,,
\]
and therefore, because of~\eqref{eq:olrT},
\[
\left\| \ol{r}_{2 T_\varepsilon} - \veccol{4-\gamma_\varepsilon/2}{2+3\gamma_\varepsilon/2} \right\|_\infty \leq \frac{\varepsilon}{2}\,.
\]
This entails that
\[
\dist_\infty\Bigl( \ol{r}_{2 T_\varepsilon}, \, \cC_{\varphi^\star(\ol{m}_{2 T_\varepsilon})} \Bigr)
\geq \dist_\infty\!\left( \veccol{4-\gamma_\varepsilon/2}{2+3\gamma_\varepsilon/2}, \,\, \cC_{2.5} \right) - \frac{\varepsilon}{2}
\geq 1 - \varepsilon/2\,.
\]

This construction can be repeated again after stage $2 T_\varepsilon$, by choosing $m_t = m^\dag$
till a stage $T'_\varepsilon$ is reached when
\[
\left\| \ol{r}_{2 T_\varepsilon+T'_\varepsilon} - \veccol{3}{4} \right\|_\infty \leq \frac{\varepsilon}{2}\,;
\]
such a stage exists by the assumption that the convergence~\eqref{eq:aim}
is achieved by the strategy $\sigma$. One can then similarly see that
\[
\dist_\infty\Bigl( \ol{r}_{2 T_\varepsilon + 2 T'_\varepsilon}, \, \cC_{\varphi^\star(\ol{m}_{2 T_\varepsilon + 2 T'_\varepsilon})} \Bigr)
\geq 1 - \varepsilon/4\,.
\]
By repeating this over again and again, one proves that
\[
\limsup_{T \to \infty} \
\dist_\infty\Bigl( \ol{r}_{T}, \, \cC_{\varphi^\star(\ol{m}_{T})} \Bigr) \geq 1\,,
\]
which  contradicts  the assumption that $\sigma$ ensures the convergence~\eqref{eq:aim}.
The claim follows.
\end{proof}

\proofof{for Example~2 (sketch)}
The same construction as for the previous example holds, by
switching between a first regime when $m_t = (-1,1) \in \R^2$ is chosen and
at the end of which the average payoff should be close to null, $\ol{r}_{T_\varepsilon} \leq \varepsilon$.
Then, another regime of the same length starts with $m_t = (1,1) \in \R^2$ and no matter what
the decision-maker does, she will get an average payoff of $1$ in this regime.
In total, at the end of the second regime,
$\ol{r}_{2{T_\varepsilon}} \geq 1/2-\varepsilon$ while the target
set is given by
\[
\cC_{\varphi^\star(\ol{m}_{2{T_\varepsilon}})} =
\cC_{\varphi^\star((0,1))} = \cC_0 = \{0\}\,,
\qquad \mbox{as} \qquad
\ol{m}_{2{T_\varepsilon}} = (0,1) \in \R^2\,.
\]
This can be repeated over and over again.
\end{proof}

\subsection{Proof of Lemma~\ref{lm:cavnotamb}}
\label{sec:cavnotamb}

\proofof{for Example~1}
We have $\cav[\varphi^\star] \equiv 4$.
To prove this fact, we first
compute $\varphi^\star$. For $\nu \in [0,1]$, the components of
$m^{(\nu)} = \nu m^\dag + (1-\nu) m^\sharp$ equal
\begin{align*}
& m^{(\nu)}_1 = \nu \left[\begin{array}{c} 3 \\ 4 \end{array}\right] + (1-\nu) \left[\begin{array}{c} 4 \\ 3 \end{array}\right] = \left[\begin{array}{c} 4-\nu \\ 3+\nu \end{array}\right] \\
\mbox{and} \qquad &
m^{(\nu)}_2 = \nu \left[\begin{array}{c} 0 \\ 5 \end{array}\right] + (1-\nu) \left[\begin{array}{c} 5 \\ 0 \end{array}\right] = \left[\begin{array}{c} 5-5\nu \\ 5\nu \end{array}\right].
\end{align*}
Therefore,
\begin{equation}
\label{eq:varphistarEx1}
\varphi^\star\bigl(m^{(\nu)}\bigr) = \min \bigl\{ \max\{4-\nu,3+\nu\}, \,\,
\max\{5-5\nu,5\nu\} \bigr\}
= \begin{cases}
4 - \nu & \mbox{if} \ \nu \in [0, \, 1/4], \\
5 - 5\nu & \mbox{if} \ \nu \in [1/4, \, 1/2], \\
5 \nu & \mbox{if} \ \nu \in [1/2, \, 3/4], \\
3+\nu & \mbox{if} \ \nu \in [3/4, \, 1]. \\
\end{cases}
\end{equation}
We note that $\varphi^\star\bigl(m^\dag\bigr) = \varphi^\star\bigl(m^\sharp\bigr) = 4$
and that $\varphi^\star \leq 4$, so that $\cav[\varphi^\star]$ is identically
equal to $4$ on the set $\cK$ defined as the convex hull of $m^\dag$ and $m^\sharp$.

\begin{figure}[h!]
\begin{center}
\includegraphics{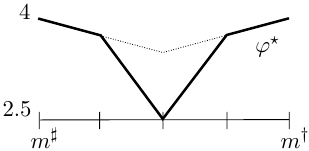}
\caption{\label{FG:Fig1-2} Graphs of the functions $\varphi^\star$ (bold solid line) and
$\varphi_1$ (dotted line).}
\end{center}
\end{figure}

Smaller target functions $\varphi$ such that the convergence~\eqref{eq:aim} holds
can be considered. This proves in particular that~\eqref{eq:aim} can also be guaranteed for the
larger $\cav[\varphi^\star]$.
Indeed,
\[
\varphi_1\bigl(m^{(\nu)}\bigr) = \max\{4-\nu,3+\nu\}
\]
is smaller than $\cav[\varphi^\star] \equiv 4$ (and even strictly smaller when
$\nu \ne 0$ and $\nu \ne 1$); see Figure~\ref{FG:Fig1-2}.
In addition, the convergence~\eqref{eq:aim} can hold for it.
Indeed, if the decision-maker plays $x_t = (1,0)$ at each round,
i.e., always picks the first component of $m_t$, then her average payoff equals
$\ol{r}_T = \ol{m}_{T,1}$, where $\ol{m}_{T} = m^{(\ol{\nu}_T)}$ for some $\ol{\nu}_T \in [0,1]$.
By definition of $\varphi_1$,
the distance of $m^{(\nu)}_1$ to $(-\infty,0]^2$ in the supremum norm is precisely $\varphi_1\bigl(m^{(\nu)}\bigr)$.
Therefore, we even have in this case
\[
\dist_\infty \bigl( \ol{r}_T, \, \cC_{\varphi_1(\ol{m}_T)} \bigr) = 0\,,
\]
which proves in particular that the convergence~\eqref{eq:aim} holds for $\varphi_1$.
\end{proof}

\proofof{for Example~2}
The computations are more involved in this seemingly simpler example.
As before, we start by computing $\varphi^\star$.
We refer to vectors $m \in [-1,1]^2$ chosen by the opponent as $m = (v,w)$ and to
the mixed actions picked by the decision-maker by $x = (x_1,1-x_1)$, where $x_1 \in [0,1]$.
The absolute value of a convex combination of $v$ and $w$ is to be minimized. This is achieved
with
\[
x^\star \bigl( (v,w) \bigr) = \left\{ \begin{array}{cl}
(1,0) & \mbox{if} \ 0 < v \leq w \ \mbox{or} \ 0 > v \geq w, \\
(0,1) & \mbox{if} \ 0 < w < v \ \mbox{or} \ 0 > w > v, \\
\displaystyle{\left( \frac{|w|}{|v|+|w|}, \,\, \frac{|v|}{|v|+|w|} \right)} & \mbox{if} \ v\,w \leq 0,
\end{array} \right.
\]
which leads to
\[
\varphi^\star \bigl( (v,w) \bigr) = \left\{ \begin{array}{cl}
\min\bigl\{|v|,\,|w|\bigr\} & \mbox{if} \ v\,w > 0, \\
0 & \mbox{if} \ v\,w \leq 0. \\
\end{array} \right.
\]
The concavification of $\varphi^\star$ thus admits the following expression: for all $v,w \in [-1,1]^2$,
\[
\cav[\varphi^\star]\bigl( (v,w) \bigr) = 1 - \frac{|v-w|}{2}\,.
\]
We replace a lengthy and tedious proof of this expression by the graphical illustrations
provided by Figure~\ref{fig:fig2}.

\begin{figure}[h!]
\begin{center}
\begin{tabular}{ccc}
  \includegraphics[scale=.85]{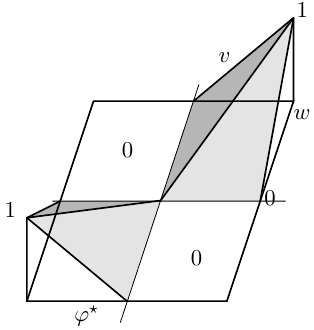}
& \includegraphics[scale=.85]{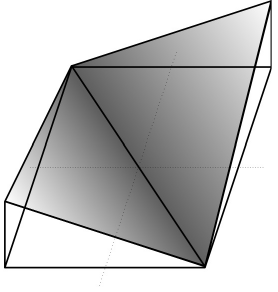}
& \includegraphics[scale=.85]{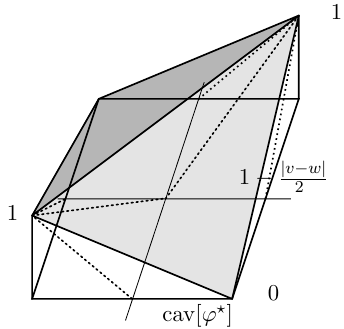}
\end{tabular}
\caption{\label{fig:fig2} Representations of $\varphi^\star$ (left),
of the alternative target function $\varphi_{1/2}$ (center),
and of the concavification $\cav[\varphi^\star]$ (right).}
\end{center}
\end{figure}

Now, we consider the target function $\varphi_{1/2}$ defined as
\begin{equation}
\label{eq:defvarphi12}
\varphi_{1/2} \bigl( (v,w) \bigr) = \frac{|v+w|}{2}\,.
\end{equation}
We denote $\ol{m}_T = \bigl( \ol{v}_T, \, \ol{w}_T \bigr)$.
By playing $x_t = (1/2,\,1/2)$ at each round, the decision-maker ensures that
\[
\ol{r}_T = \frac{\ol{v}_T + \ol{w}_T}{2}
\qquad \mbox{while} \qquad
\varphi_{1/2}\bigl(\ol{m}_T\bigr) = \left| \frac{\ol{v}_T + \ol{w}_T}{2} \right|.
\]
Thus, for this example again (in which the $\ell_p$--norm can be chosen freely), we have
\[
\dist_p \bigl( \ol{r}_T, \, \cC_{\varphi_{1/2}(\ol{m}_T)} \bigr) = 0\,.
\]
The convergence~\eqref{eq:aim} holds for $\varphi_{1/2}$ and also for
$\cav[\varphi^\star]$, since the latter is larger than $\varphi_{1/2}$ (strictly
larger at some points). Indeed,
the inequality
$\alpha_{1/2} \leq \cav[\varphi^\star]$ follows from the fact that
\[
|v+w|+|v-w| = \max\bigl\{ |v|,\,|w|\bigr\} \leq 2\,.
\]
That the inequality $\alpha_{1/2} \leq \cav[\varphi^\star]$
can be strict is seen, e.g., at $(0,0)$.
Again, an illustration is provided by Figure~\ref{fig:fig2}.
\end{proof}

\subsection{Proof of Lemma~\ref{lm:xstar}}
\label{sec:lm:xstar}

\proofof{for Example~2}
The proof below illustrates what we could call a ``sign compensation.''
We have here, for all $(v,w) \in \R^2$,
\begin{align*}
\cav[\varphi^\star]\bigl((v,w)\bigr) = & \sup \Bigg\{ \sum_{i=1}^N \lambda_i \, \bigl| x^\star\bigl( (v_i,w_i) \bigr) \odot (v_i,w_i) \bigr|  : \ \
N \geq 1 \ \ \mbox{and} \ \
\sum_{i=1}^N \lambda_i (v_i,w_i) = (v,w) \Bigg\}, \\
& \phantom{r} \\
\varphi^{x^\star}\bigl((v,w)\bigr) = & \sup \Bigg\{ \left| \sum_{i=1}^N \lambda_i \, x^\star \bigl( (v_i,w_i) \bigr) \odot (v_i,w_i) \right| : \ \
N \geq 1 \ \ \mbox{and} \ \
\sum_{i=1}^N \lambda_i (v_i,w_i) = (v,w) \Bigg\}.
\end{align*}
By a tedious case study consisting of identifying the worst convex decompositions
$\sum \lambda_i (v_i,w_i)$, one then gets the explicit expression
\begin{equation}
\label{eq:defvarphixstar}
{\varphi}^{x^\star}\bigl((v,w)\bigr) =
\left\{
\begin{array}{ll}
\bigl( 1+|v+w| \bigr)/3 & \mathrm{if}\ |2w-v|\leq 1 \ \mathrm{and}\  |2v-w|\leq 1,\\
(1+v)/2 & \mathrm{if}\ 2w-v \geq 1 \ \mathrm{and}\  v+w \geq 0,\\
(1+w)/2 & \mathrm{if}\ 2v-w \geq 1 \ \mathrm{and}\  v+w\geq 0,\\
(1-v)/2 & \mathrm{if}\ 2w-v \leq -1 \ \mathrm{and}\  v+w\leq 0,\\
(1-w)/2 & \mathrm{if}\ 2v-w \leq -1 \ \mathrm{and}\  v+w \leq 0,
\end{array}
\right.
\end{equation}
to be compared to the closed-form expression obtained earlier for $\cav[\varphi^\star]$,
namely
\[
\cav[\varphi^\star]\bigl( (v,w) \bigr) = 1 - \frac{|v-w|}{2}\,.
\]
Admittedly, a picture would help: we provide one as Figure~\ref{fig:3}.
\begin{figure}[h!]
\begin{center}
\begin{tabular}{cc}
\includegraphics{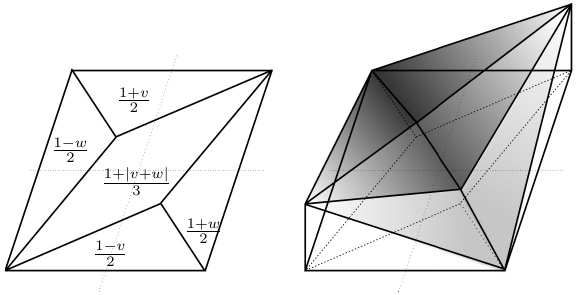} &
\includegraphics{Fig2-cavvarphistar}
\end{tabular}
\caption{\label{fig:3} Representations in 2D -- 3D of ${\varphi}^{x^\star}$ (left and center)
and 3D representation of $\cav[\varphi^\star]$.}
\end{center}
\end{figure}
We see (on the picture or by direct calculations) that ${\varphi}^{x^\star} \leq \cav[\varphi^\star]$,
and even that ${\varphi}^{x^\star} \prec \cav[\varphi^\star]$, by considering the respective
values $1/3$ and $1$ at $(0,0)$.
\end{proof}

\proofof{for Example~1}
We will prove that $\varphi^{x^\star} = \varphi_1$ so that the result will follow
from the inequality $\varphi_1 \prec \cav[\varphi^\star]$ already proved in Section~\ref{sec:cavnotamb}.

Indeed, as can be seen in the computations leading to the closed-form expression~\eqref{eq:varphistarEx1}
of $\varphi^\star$, we have
\[
x^\star\bigl(m^{(\nu)}\bigr) =
\begin{cases}
(1,0) & \mbox{if} \ \nu \in [0,\,1/4] \cup [3/4,\,1], \\
(0,1) & \mbox{if} \ \nu \in [1/4,\,3/4].
\end{cases}
\]
Therefore,
\[
x^\star\bigl(m^{(\nu)}\bigr) \odot m^{(\nu)} =
\begin{cases}
m^{(\nu)}_1 & \mbox{if} \ \nu \in [0,\,1/4] \cup [3/4,\,1], \\
m^{(\nu)}_2 & \mbox{if} \ \nu \in [1/4,\,3/4]. \\
\end{cases}
\]
But for $\nu \in [1/4,\,3/4]$, we have the component-wise inequality
\[
m^{(\nu)}_2 = \veccol{5 - 5\nu}{5\nu}
\leq  \veccol{4 - \nu}{3+\nu} = m^{(\nu)}_1 \,,
\]
which entails that for all $\nu \in [0,1]$, again component-wise, $x^\star\bigl(m^{(\nu)}\bigr) \odot m^{(\nu)} \leq m^{(\nu)}_1$.
Substituting in~\eqref{eq:phixstar} and using that the supremum distance to the negative orthant
is increasing with respect to component-wise inequalities, i.e.,
\[
(v,w) \in \R^2 \,\, \longmapsto \,\, \dist_\infty \bigl( (v,w), \, (-\infty,0]^2 \bigr) = \max \bigl\{ (v)_+, \, (w)_+ \bigr\}\,,
\]
we get that
\begin{align*}
\varphi^{x^\star}\bigl(m^{(\nu)}\bigr) & \ = \sup \Bigg\{ \dist_\infty\!\left( \sum_{i=1}^N \lambda_i \, x^\star\bigl(m^{(\nu_i)}\bigr) \odot m^{(\nu_i)}
,\,\,(-\infty,0]^2 \right) : \ \
N \geq 1 \ \ \mbox{and} \ \ \sum_{i=1}^N \lambda_i \nu_i = \nu \Bigg\} \\
& \ \leq \sup \Bigg\{ \dist_\infty\!\left( \sum_{i=1}^N \lambda_i \, m_1^{(\nu_i)} ,\,\,(-\infty,0]^2\right) : \ \
N \geq 1 \ \ \mbox{and} \ \
\sum_{i=1}^N \lambda_i \nu_i = \nu \Bigg\} \\
& \qquad \qquad = \dist_\infty\bigl( m_1^{(\nu)}, \,\, (-\infty,0]^2 \bigr)
= \varphi_1\bigl(m^{(\nu)}\bigr)\,.
\end{align*}

The converse inequality $\varphi^{x^\star} \geq \varphi_1$ follows from the decomposition of any
$\nu \in [0,1]$ as the convex combination of $1$, with weight $\lambda_1 = \nu$, and $0$, with weight $\lambda_2 = 1-\nu$.
In particular,
\begin{align*}
\varphi^{x^\star}\bigl(m^{(\nu)}\bigr) \geq & \,\,
\dist_\infty \Bigl( \nu \, x^\star\bigl(m^{(1)}\bigr) \odot m^{(1)}
+ (1-\nu) \, x^\star\bigl(m^{(0)}\bigr) \odot m^{(0)},\,\,(-\infty,0]^2 \Bigr) \\ = & \,\,
\dist_\infty\bigl( m_1^{(\nu)}, \,\, (-\infty,0]^2 \bigr) \,\, = \varphi_1\bigl(m^{(\nu)}\bigr) \,,
\end{align*}
as both $x^\star\bigl( m^{(0)} \bigr) = x^\star \bigl( m^{(1)} \bigr) = (1,0)$ as indicated above.
\end{proof}

\newpage
\section{Other technical proofs}

\subsection{Proof of two facts related to Definition~\ref{def:defach}}
\label{sec:B1}

The comments after Definition~\ref{def:defach} mentioned two facts, that we now prove.
First, that condition~\eqref{eq:cvgal} is less restrictive in general than~\eqref{eq:cv}.
Second, that for continuous target functions $\varphi$, the two definitions~\eqref{eq:cv}
and~\eqref{eq:cvgal} coincide, i.e., that that~\eqref{eq:cvgal} entails~\eqref{eq:cv}.

\paragraph{Condition~\eqref{eq:cvgal} is less restrictive in general than~\eqref{eq:cv}.}
Non-continuous target functions need to be considered to that end.
We consider a toy case when $\cA$ is reduced to one element (so that the decision-maker has no decision
to make and has to play this action), and the opponent player chooses elements in $\R$, i.e., $d=1$,
and more precisely, $\cK = [0,1]$. The target set equals $\cC = \{ 0 \}$ and the target function $\varphi$
is defined as
\[
\varphi(m)
= \begin{cases}
0 & \mbox{if} \ m \in [0,1), \\
1 & \mbox{if} \ m=1.
\end{cases}
\]
Since $d = A = 1$, we can identify $m_t$ and $r_t$.
We consider the sequence $m_t = 1-1/t$. We have
\[
(\ol{m}_T,\,\ol{r}_T) \longrightarrow
(1,1) \in \cG_\varphi\,, \qquad \qquad \mbox{since} \quad
\cG_\varphi = \bigl\{ (m,0) : m \in [0,1) \bigr\} \cup
\bigl( \{1\} \times [-1,1] \bigr) \,.
\]
On the contrary, for all $T$, we have $\cC_{\varphi(\ol{m}_T)} = \{0\}$
and therefore,
\[
\dist_p \bigl( \ol{r}_T, \, \cC_{\varphi(\ol{m}_T)} \bigr) = \ol{r}_T
\]
does not converge to $0$ as $T \to \infty$ (this sequence converges to $1$ actually).

\paragraph{Proof that~\eqref{eq:cvgal} entails~\eqref{eq:cv} under a continuity assumption.}
We consider a continuous function $\varphi : \cK \to [0,+\infty)$.
To show that~\eqref{eq:cvgal} entails \eqref{eq:cv}, it suffices to show that there exists a
function $f: (0,+\infty) \to (0,+\infty)$ with $f(\varepsilon) \to 0$ as $\varepsilon \to 0$,
such that for all $(m,r) \in \cK \times \R^d$,
\begin{equation}
\label{eq:eqvcontass}
\dist_p\bigl(r,\cC_{\varphi(m)}\bigr) \leq f\Bigl(\dist_p\bigl( (m,r),\cG_{\varphi}\bigr)\Bigr)\,;
\end{equation}
the required uniformities with respect to the strategies of the opponent will be carried over.

To that end, the continuity of $\varphi$ will be exploited through the following two properties.
First, $\cG_\varphi$ is closed.
Second, since $\cK$ is bounded, $\varphi$ is actually uniformly continuous:
we denote by $\omega : (0,+\infty) \to (0,+\infty)$
its modulus of continuity in the $\ell_p$--norm, which is a non-decreasing function that
satisfies $\omega(\varepsilon) \to 0$ as $\varepsilon \to 0$.

We denote by $(m_\cG,r_\cG)$ the projection in $\ell_p$--norm of $(m,r) \in \cK \times \R^d$
onto the closed set $\cG_\varphi$. By definition, $r_{\cG} \in \cC_{\varphi(m_\cG)}$ and
\begin{equation}
\label{eq:Hausd2}
\dist_p\bigl((m,r),\cG_\varphi\bigr) = \bigl\Arrowvert (m,r) - (m_\cG,r_\cG) \bigr\Arrowvert_p
\geq \Arrowvert m - m_{\cG} \Arrowvert_p\,.
\end{equation}
We also define an element $r'_{\cG} \in \cC_{\varphi(m)}$ as follows.
If $r_{\cG} \in \cC_{\varphi(m)}$ then we let $r'_{\cG} = r_{\cG}$.
Otherwise, $\varphi(m_\cG) - \varphi(m) > 0$ and as $r_{\cG} \in \cC_{\varphi(m_\cG)}$,
there exists an element $r_0 \in \cC$ such that $\Arrowvert r_{\cG} - r_0 \Arrowvert_p
\leq \varphi(m_\cG)$. (We recall that $\cC$ is a closed set and its expansions are closed expansions.)
We denote by $d$ the vector
\[
d = \frac{1}{\varphi(m_\cG)} (r_{\cG} - r_0)\,;
\]
by construction, $\Arrowvert d \Arrowvert_p \leq 1$. We introduce a new point $r'_{\cG}$
and provide a rewriting of $r_{\cG}$:
\[
r'_{\cG} = r_0 + \varphi(m) \, d \, \in \cC_{\varphi(m)}
\qquad \mbox{and} \qquad
r_{\cG} = r_0 + \varphi(m_\cG) \, d\,.
\]
These two equalities yield that
\[
\Arrowvert r_\cG - r'_\cG \Arrowvert_p  = \bigl( \varphi(m_\cG) - \varphi(m) \bigr) \, \Arrowvert d \Arrowvert_p
\leq \varphi(m_\cG) - \varphi(m)\,.
\]
Summarizing, we have in all cases (whether $r_{\cG}$ belongs to $\cC_{\varphi(m)}$ or not)
\begin{equation}
\label{eq:Hausd}
\Arrowvert r_\cG - r'_\cG \bigr\Arrowvert_p \leq \bigl| \varphi(m_\cG) - \varphi(m) \bigr| \, .
\end{equation}
Since $r'_{\cG} \in \cC_{\varphi(m)}$,
we get, by a triangle inequality,
\begin{eqnarray*}
\dist_p\bigl(r,\cC_{\varphi(m)}\bigr) & \leq &
\Arrowvert r - r'_{\cG} \bigr\Arrowvert_p
= \bigl\Arrowvert (m,r) - (m,r'_{\cG}) \bigr\Arrowvert_p \\
& \leq & \bigl\Arrowvert (m,r) - (m_{\cG},r_{\cG}) \bigr\Arrowvert_p
+ \bigl\Arrowvert (m_{\cG},r_{\cG}) - (m,r_{\cG}) \bigr\Arrowvert_p
+ \bigl\Arrowvert ( m,r_{\cG}) - (m,r'_{\cG}) \bigr\Arrowvert_p \\
& \leq & 2 \, \dist_p\bigl((m,r),\cG_\varphi\bigr)
+ \Arrowvert r_{\cG} - r'_{\cG} \Arrowvert_p\,,
\end{eqnarray*}
where the last inequality follows from~\eqref{eq:Hausd2}.
By~\eqref{eq:Hausd} and the uniform continuity of $\varphi$,
the last term in the right-hand side of the display above can be bounded as
\[
\Arrowvert r_{\cG} - r'_{\cG} \Arrowvert_p \leq
\bigl| \varphi(m_\cG) - \varphi(m) \bigr| \leq
\omega\bigl( \Arrowvert m_{\cG} - m \Arrowvert \bigr)
\leq \omega \Bigl( \dist_p\bigl((m,r),\cG_\varphi\bigr) \Bigr)\,,
\]
where for the last inequality, we used again~\eqref{eq:Hausd2}
and the fact that $\omega$ is non-decreasing.
Putting all pieces together, we proved~\eqref{eq:eqvcontass} with $f : x \mapsto 2x + \omega(x)$.

\subsection{A lemma used in the proof of Theorem~\ref{th:main}}
\label{sec:lemmaproofmainthm}

\begin{lemma}
Consider two positive numbers $\gamma_1,\gamma_2$ and form
the positive sequence $(u_n)$ defined by $u_1 = \gamma_2$ and
\[
u_{n+1} = u_n + 2 \gamma_1 \sqrt{(n+1)\, u_n} + \gamma_2 (n+1)^2
\]
for all $n \geq 1$. Then
\[
\forall \, n \geq 1, \qquad u_n \leq \max\bigl\{2\gamma_1^2,\,\gamma_2\bigr\} \, n^3\,.
\]
\end{lemma}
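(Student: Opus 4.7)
\textbf{The plan} is to prove the slightly cleaner statement by strong induction: setting $C = \max\{2\gamma_1^2,\,\gamma_2\}$, I will show $u_n \leq C n^3$ for all $n \geq 1$. The base case $n=1$ is immediate from $u_1 = \gamma_2 \leq C$. For the inductive step, substituting $u_n \leq C n^3$ into the defining recursion yields
\[
u_{n+1} \leq C n^3 + 2\gamma_1\sqrt{C}\, n^{3/2}\sqrt{n+1} + \gamma_2(n+1)^2,
\]
and I want to bound this by $C(n+1)^3 = Cn^3 + C(3n^2 + 3n + 1)$. The $Cn^3$ terms cancel, so what remains is to establish
\[
2\gamma_1\sqrt{C}\, n^{3/2}\sqrt{n+1} + \gamma_2(n+1)^2 \;\leq\; C(3n^2 + 3n + 1).
\]

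The inequality $C \geq \gamma_2$ absorbs the last term on the left into $C(n+1)^2 = C(n^2 + 2n + 1)$, which leaves a residual budget of $C(2n^2+n) = Cn(2n+1)$ on the right. After dividing through by $n$, the claim reduces to
\[
2\gamma_1\sqrt{C}\,\sqrt{n(n+1)} \;\leq\; C(2n+1).
\]
Here the other half of the definition of $C$ enters: the bound $\gamma_1 \leq \sqrt{C/2}$ gives $2\gamma_1\sqrt{C} \leq \sqrt{2}\,C$, so it suffices to show $\sqrt{2}\sqrt{n(n+1)} \leq 2n+1$. Squaring both sides transforms this into $2n^2 + 2n \leq 4n^2 + 4n + 1$, which is clearly valid for every $n \geq 1$.

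There is no real obstacle to speak of: the only subtle point is recognizing that both parts of the maximum defining $C$ are genuinely used, one for the deterministic additive term $\gamma_2(n+1)^2$ and the other (via $\gamma_1 \leq \sqrt{C/2}$) for the square-root cross-term. Once the split is set up this way, the remaining verification is an elementary polynomial inequality. In particular, the leading constant $2$ in $2\gamma_1^2$ is exactly what is needed to close the induction with a clean $Cn^3$ bound.
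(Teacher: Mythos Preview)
Your proof is correct and follows essentially the same route as the paper: induction on $u_n \leq Cn^3$ with $C=\max\{2\gamma_1^2,\gamma_2\}$, substituting the hypothesis into the recursion and using $\gamma_2\leq C$ and $\gamma_1\leq\sqrt{C/2}$ to close the inductive step. The only difference is cosmetic bookkeeping in that step---the paper bounds $\sqrt{(n+1)n^3}\leq\sqrt{2}\,n^2$ and compares coefficients, whereas you first allocate $C(n+1)^2$ to absorb the $\gamma_2$--term and then verify $\sqrt{2n(n+1)}\leq 2n+1$ for the remainder.
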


\begin{proof}
We proceed by induction and note that the relation is satisfied by construction for $n = 1$.
Assuming now that it holds for some $n \geq 1$, we show that it is also true for $n+1$.
Denoting $C = \max\bigl\{2\gamma_1^2,\,\gamma_2\bigr\}$, we get
\[
u_{n+1} = u_n + 2 \gamma_1 \sqrt{(n+1)\, u_n} + \gamma_2 (n+1)^2
\leq C\,n^3 + 2 \gamma_1 \sqrt{C} \sqrt{(n+1)\,n^3} + \gamma_2 (n+1)^2\,.
\]
It suffices to show that the latter upper bound is smaller than $C\,(n+1)^3$,
which follows from
\[
2 \gamma_1 \sqrt{C} \sqrt{(n+1)\,n^3} + \gamma_2 (n+1)^2
\leq \bigl( 2 \gamma_1 \sqrt{2C} +\gamma_2 \bigr) \,n^2 + 2\gamma_2\,n + \gamma_2
\leq 3C\,n^2 + 3C\,n + C\,;
\]
indeed, the first inequality comes from bounding $(n+1) n^3$ by $2 n^4$ and expanding the $(n+1)^2$ term,
while the second inequality holds because $C \geq \gamma_2$ and
$2C \geq 2 \gamma_1 \sqrt{2C}$ by definition of~$C$.
\end{proof}

\newpage
\section{Some thoughts on the optimality of target functions}
\label{sec:admiss}

We first define a notion of optimality,
based on the classical theory
of mathematical orderings, with $\prec$ (see Definition~\ref{def:prec}) being seen
as a strict partial order, with associated non-strict partial order
denoted by $\preccurlyeq$ (corresponding to the standard pointwise inequality $\leq$
for functions).

\subsection{On the existence of admissible target functions}

\begin{definition}
A target function $\varphi$ is admissible if it is achievable and
if there exists no other achievable
target function $\varphi'$ such that $\varphi' \prec \varphi$.
\end{definition}

There might exist several, even an infinite number of, admissible target functions,
as we will show below for Example~1. But there exists always at least one such
admissible function, as we show below in a non-constructive way. We unfortunately
were unable to exhibit general, concrete and closed-form admissible target functions.

\begin{lemma}
\label{lm:existadm}
In any unknown game there exists at least one admissible mapping.
\end{lemma}

\begin{proof}
The proof is
based on an application of Zorn's lemma.
We prove below that the
set $\cT$ of all achievable target functions $\varphi : \cK \to [0,+\infty)$,
which is partially ordered for $\preccurlyeq$,
has the property that every totally ordered subset $\cT_\Theta = \{ \varphi_\theta, \,\, \theta \in \Theta\}$
has a lower bound in $\cT$. In that case, Zorn's lemma ensures that the set $\cT$ contains at least one minimal element:
an element $\underline{\varphi}$ such that no other element $\varphi \in \cT$
satisfies $\varphi \prec \underline{\varphi}$.

Given a totally ordered subset $\cT_\Theta$, we can define the target function
\[
\varphi_\Theta : m \in \cK \longmapsto \inf_{\theta\in \Theta} \varphi_\theta(m)\,;
\]
$\varphi_\Theta$ is of course smaller than any element of $\cT_\Theta$. The point is to
show that $\varphi_\Theta \in \cT$, i.e., that $\varphi_\Theta$ is still achievable.

A property that we will use repeatedly below is that if two target functions are such that
$\varphi \preccurlyeq \phi$, then $\cG_{\varphi} \subseteq \cG_{\phi}$.

Now, by definition, the fact that the $\varphi_\theta$ are achievable means
that the compact sets $\cG_{\varphi_\theta}$ are each approachable
for the game with payoffs $(x,m) \in \Delta(\cA) \times \cK \mapsto (m, \, x \odot m)$;
in particular, they are non empty. The compact set
\[
\cG_{\varphi_\Theta} = \bigcap_{\theta \in \Theta} \, \cG_{\varphi_\theta}
\]
cannot be empty. Indeed, if it were, fixing any $\theta' \in \Theta$, we would have
that the subsets $\cG_{\varphi_{\theta'}} \setminus \cG_{\varphi_\theta}$
cover the compact topological space $\cG_{\varphi_{\theta'}}$.
As these subsets are open sets in the topological space $\cG_{\varphi_{\theta'}}$,
only finitely many of them would be needed for the covering, call them
$\cG_{\varphi_{\theta'}} \setminus \cG_{\varphi_j}$, with $j = 1,\ldots,N$.
Since $\cT_\Theta$ is totally ordered, one of the sets $\cG_{\varphi_j}$
is minimal for inclusion $\subseteq$, and therefore, one of the sets
$\cG_{\varphi_{\theta'}} \setminus \cG_{\varphi_j}$ is maximal for the inclusion $\subseteq$,
say, the one corresponding to $j=1$. Therefore, we would have
$\cG_{\varphi_{\theta'}} \setminus \cG_{\varphi_1} = \cG_{\varphi_{\theta'}}$.
As $\cT_\Theta$ is totally ordered, we would either have $\varphi_1 \preccurlyeq \varphi_{\theta'}$
and $\cG_{\varphi_1} \subseteq \cG_{\varphi_{\theta'}}$,
or $\varphi_{\theta'} \preccurlyeq \varphi_1$
and $\cG_{\varphi_{\theta'}} \subseteq \cG_{\varphi_1}$.
This would lead to $\cG_{\varphi_1} = \emptyset$ in the former case,
and $\cG_{\varphi_{\theta'}} = \emptyset$ in the latter case: in
both cases, to a contradiction.

In addition, we now prove that for all $\varepsilon > 0$,
there exists $\theta^\varepsilon \in \Theta$ such that
$\cG_{\varphi_{\theta^\varepsilon}}$ is included in the open $\varepsilon$--expansion of $\cG_{\varphi_\Theta}$,
which we denote by $\cG_{\varphi_\Theta,\varepsilon}$.
Indeed, denote by $\cH_{\varphi_\theta}$ the compact
sets $\cH_{\varphi_\theta} = \cG_{\varphi_\theta} \setminus \cG_{\varphi_\Theta,\varepsilon}$.
We have that
\[
\cH_{\varphi_\Theta} = \bigcap_{\theta \in \Theta} \, \cH_{\varphi_\theta}
= \cG_{\varphi_\Theta} \setminus \cG_{\varphi_\Theta,\varepsilon} = \emptyset\,.
\]
Therefore, by the same argument as above, we see that there must exist some
$\theta^\varepsilon$ such that $\cH_{\varphi_{\theta^\varepsilon}} = \emptyset$,
which is exactly what we wanted to prove.

So, summarizing, we
proved that $\cG_{\varphi_\Theta}$ is non empty and that each of its $\varepsilon$--open expansion is approachable (as it contains
an approachable set).
As in the proof of Lemma~\ref{lm:phixstarnotopti},
this means that $\cG_{\varphi_\Theta}$ is a $0$--approachable set, thus an approachable set,
or put differently, that $\varphi_\Theta$ is achievable.
\end{proof}

\subsection{Illustration on Examples~1 and~2}

Which response function $\Psi$ should we choose in practice?
And are target functions $\varphi^\Psi$ always admissible?
A convenient and natural choice in practice is $\Psi = x^\star$, but Example~2 shows
that unfortunately, $\varphi^{x^\star}$ is not always admissible.
Example~1 shows that many different target functions $\varphi^\Psi$ may be admissible.
It is thus difficult to issue any general theory on how to choose $\Psi$
and even on the optimality of the class of target functions $\varphi^\Psi$.

\paragraph{Example 2: unfortunately, $\varphi^{x^\star}$ is not admissible.}
Indeed, we have $\varphi_{1/2} \prec \varphi^{x^\star}$,
as can be seen by carefully comparing the closed-form expressions~\eqref{eq:defvarphi12}
and~\eqref{eq:defvarphixstar}. On the other hand, $\varphi_{1/2}$ is achievable:
it suffices to play $x_t = (1/2,\,1/2)$ at each round. Actually, $\varphi_{1/2}$ is of
the form $\varphi^\Psi$ for the constant response function $\Psi \equiv (1/2,\,1/2)$.

\paragraph{Example 1: all the target functions associated with the $\Psi_x \equiv (x,1-x)$ are admissible.}
We illustrate the general existence result of Lemma~\ref{lm:existadm} by
showing that in Example~1 the target functions $\varphi_x = \varphi^{\Psi_x}$ associated with the
constant response functions $\Psi_x \equiv (x,1-x)$ are admissible, for all $x \in [0,1]$. This
corresponds to the case when the decision-maker chooses the mixed action $(x,1-x)$ at all rounds.
In particular, the proof of Lemma~\ref{lm:xstar} indicates that $\varphi_1 = \varphi^{x^\star}$,
the latter is thus admissible unlike in Example~2. \smallskip

Closed-form expressions of these target functions $\varphi_x$ will be needed. For $\nu \in [0,1]$,
when the decision-maker plays $(x,1-x)$ while the vector of vector payoffs is $m^{(\nu)}$, she gets
an average payoff which we denote by $r\bigl((x,1-x),\, m^{(\nu)} \bigr)$ and which equals
\begin{align*}
r\bigl((x,1-x),\,m^{(\nu)}\bigr) =
& \,\, x \bigl( \nu \, m_1^\dag + (1-\nu) m_1^\sharp \bigr) + (1-x)
\bigl( \nu \,m_2^\dag + (1-\nu) m_2^\sharp \bigr) \\ = & \,\,
x \left( \nu \veccol{3}{4} + (1-\nu) \veccol{4}{3} \right) + (1-x)
\left( \nu \veccol{0}{5} + (1-\nu) \veccol{5}{0} \right) \\
= & \
\veccol{x(4-\nu) + 5(1-x)(1-\nu)}{x(3+\nu) + 5(1-x)\nu} =
\veccol{5-x - \nu(5-4x)}{3x+\nu(5-4x)}\,.
\end{align*}
The underlying response function $\Psi_x$ being constant, no convex decomposition
needs to be considered in the defining supremum for $\varphi_x\bigl(m^{(\nu)}\bigr)$ and the latter
equals
\[
\varphi_x\bigl(m^{(\nu)}\bigr) = \dist_\infty\Bigl( r\bigl((x,1-x),\,m^{(\nu)}\bigr), \,\, (-\infty,0]^2 \Bigr)
= \max \big\{ 5-x - \nu(5-4x),\,\,3x+\nu(5-4x)\bigr\}\,.
\]
Since $\nu \mapsto 5-x - \nu(5-4x)$ is decreasing, and $\nu \mapsto 3x+\nu(5-4x)$ is increasing,
and both functions take the same value $5/2 - x/2$ at $\nu = 1/2$, we get
\[
\varphi_x\bigl(m^{(\nu)}\bigr) =
\begin{cases}
5-x - \nu(5-4x) & \mbox{if} \ \nu \in [0, \, 1/2], \\
3x+\nu(5-4x) & \mbox{if} \ \nu \in [1/2, \, 1].
\end{cases}
\]

Our proof follows the methodology used to prove Lemma~\ref{lm:notach}. We fix any
strategy of the decision-maker achieving a target function $\varphi \leq \varphi_x$, for some fixed $x \in [0,1]$,
and we show that necessarily, $\varphi = \varphi_x$. We provide a detailed proof of the equality only for $m^{(\nu)}$
where $\nu$ lies in the interval $(3/4, \,1]$; but this proof
can be adapted in a straightforward manner to prove the equality as well the intervals
$[0, \,1/4]$, $[1/4, \,1/2]$ and $[1/2, \,3/4]$. As in the proof of Lemma~\ref{lm:notach},
it suffices to consider the almost sure statement of convergence as in~\eqref{eq:aim}; the uniformity with
respect to strategies of the opponent is not needed. All statements below hold almost surely
and the times $T$ and $T'$ should be thought of as random variables.

Our argument for $\nu \in [3/4, \,1]$ is based on three sequences of mixed actions.
For the first one, assume that the opponent chooses $m^{\dag}$ (corresponding to $\nu=1$)
during $T$ stages, where $T$ can be made arbitrarily large.
We denote by $(v_T,1-v_T)$ the average of the mixed actions $(x_t,1-x_t)$ played by the decision-maker during these rounds.
The average payoff vector received equals
\[
\veccol{3 v_T}{5-v_T}\,,
\]
whose distance to the negative orthant is $5-v_T$.
Since $\varphi_x(m^{\dag}) = 5-x$ and the strategy achieves $\varphi(m^{\dag})$,
where by assumption $\varphi(m^{\dag}) \leq \varphi_x(m^{\dag})$,
it holds that $\limsup 5-v_T \leq 5-x$ as $T \to \infty$. For the sake of compactness, we will denote this fact by
$5-v_T \lesssim 5-x$. This entails that $\liminf v_T \geq x$ as $T \to \infty$,
a fact that we denote by $v_T \gtrsim x$.

During the next $T$ stages, we
assume that the opponent chooses $m^{\sharp}$ (which corresponds to $\nu = 0$) and denote by $(w_T,1-w_T)$
the average of the mixed actions $(x_t,1-x_t)$ played by the decision-maker during these rounds.
The average payoff vectors received between rounds $T+1$ to $2T$, on the one hand,
and during rounds $1$ to $2T$, on the other hand, are therefore respectively equal to
\[
\veccol{5-w_T}{3 w_T} \qquad \mbox{and} \qquad
\frac{1}{2} \veccol{3 v_T + 5 - w_T}{5 - v_T + 3 w_T}\,,
\]
so that the distance of the latter to the negative orthant is given by
\[
\frac{1}{2} \max\bigl\{ 3 v_T + 5 - w_T, \,\, 5 - v_T + 3 w_T \bigr\}\,,
\]
which we know is asymptotically smaller than $\varphi\bigl(m^{(1/2)}\bigr)$ by achievability of $\varphi$, where
by assumption
\[
\varphi\bigl(m^{(1/2)} \bigr) \leq \varphi_x \bigl( m^{(1/2)} \bigr) = 5/2 + x\,.
\]
We thus obtained the following system of equations:
\[
\left\{
\begin{array}{rl}
v_T & \gtrsim x, \\
3 v_T - w_T & \lesssim 2x, \\
- v_T + 3 w_T & \lesssim 2x.
\end{array}
\right.
\]
The sum of the last two inequalities is $v_T + w_T \lesssim 2x$. Together with the first inequality
$v_T \gtrsim x$, it leads to $w_T \lesssim x$. Substituting in the second inequality,
we get $3 v_T - w_T \approx 2x$, where the $\approx$ symbol denotes a convergence: $3 v_T - w_T \to 2x$
as $T \to \infty$. Summing the proved limits $- 9 v_T + 3 w_T \approx -6 x$ and $8 v_T \gtrsim 8 x$
yields $- v_T + 3 w_T \gtrsim 2 x$, thus $- v_T + 3 w_T \approx 2 x$. From the latter
limit and $3 v_T - w_T \approx 2x$, we finally get $v_T \approx x$ and $w_T \approx x$.

Consider now some $\nu > 3/4$. We show that $\varphi(\nu) \geq \varphi_x(\nu)$.
To that end, assume that after the $T$ stages of $m^\dag$
the opponent switches instead to $m^{(3/4)} = (3/4) \, m^{\dag} + (1/4) \, m^\sharp$ during
\[
T' = \frac{1-\nu}{\nu-3/4} T
\]
rounds. Note that in this case, the average values of the coefficients for $m^{\dag}$ and $m^{\sharp}$ used
in the first $T+T'$ rounds are proportional to
\begin{equation}
\label{eq:freqnu}
1 + \frac{3}{4} \, \frac{1-\nu}{\nu-3/4} = \frac{\nu}{4\nu - 3}
\qquad \mbox{and} \qquad
0 + \frac{1}{4} \, \frac{1-\nu}{\nu-3/4} = \frac{1-\nu}{4\nu - 3}
\end{equation}
that is, $\ol{m}_{T+T'} = m^{(\nu)} = \nu \, m^\dag +(1-\nu) \,m^\sharp$ was played.
We perform first some auxiliary calculations: by multiplying the equalities in~\eqref{eq:freqnu}
by $T$, we see that the total number $T+T'$ of rounds equals $T+T' = T/(4\nu-3)$.
In particular, we have
\[
T/(T+T') = 4\nu-3 \qquad \mbox{and} \qquad T'/(T+T') = 1 - (4\nu-3) = 4 - 4\nu\,.
\]
Finally, denoting by $u_{T'}$ the average mixed action played by the decision-maker in rounds
$T+1$ to $T+T'$, we have that the average vector payoffs during rounds
$T+1$ to $T+T'$ and during rounds $1$ to $T+T'$ are respectively equal to
\[
r\bigl( (u_{T'},1-u_{T'}),\,3/4 \bigr) = \veccol{5-u_{T'}-(3/4)(5-4u_{T'})}{3u_{T'}+(3/4)(5-4u_{T'})} = \veccol{2u_{T'}+5/4}{15/4}
\]
and
\begin{equation}
\label{eq:veccolorth}
\frac{T}{T+T'}\,\veccol{3 v_T}{5-v_T} +
\frac{T'}{T+T'}\,\veccol{2u_{T'}+5/4}{15/4}
= (4\nu-3) \,\veccol{3 v_T}{5-v_T} + (4-4\nu)\,\veccol{2u_{T'}+5/4}{15/4}\,.
\end{equation}
The overall average payoff is given by the distance of this vector in the supremum norm to the
negative orthant and must be smaller than $\varphi\bigl(m^{(\nu)}\bigr)$ in the limit, by achievability of $\varphi$.
However, the said distance of~\eqref{eq:veccolorth} to the negative orthant is bound to be larger than the
second component of~\eqref{eq:veccolorth}, which equals
\[
(4\nu-3)(5-v_T) + (4-4\nu) \frac{15}{4} \,\, \longrightarrow \,\,
(4\nu-3)(5-x) + (4-4\nu) \frac{15}{4} =
5\nu+3x-4\nu x = \varphi_x\bigl(m^{(\nu)}\bigr)\,,
\]
as $T \to \infty$, where we substituted the above limit $v_T \to x$.

We thus proved $\varphi\bigl(m^{(\nu)}\bigr) \geq \varphi_x\bigl(m^{(\nu)}\bigr)$, as claimed.

\end{document}